\documentclass[10pt,twocolumn,letterpaper]{article}

\usepackage{cvpr} % uncomment this for the final submission
\usepackage{times}
\usepackage{epsfig}
\usepackage{graphicx}
\usepackage{amsmath}
\usepackage{amssymb}
\usepackage{amsthm}
\usepackage{comment}

\usepackage[dvipsnames]{xcolor}
\usepackage{booktabs}
\usepackage{colortbl}
\usepackage{footnote}
\usepackage{accents}
\usepackage{multirow}
\usepackage{makecell}
\usepackage{subcaption}
\usepackage{enumitem} \setlist{nosep}
\usepackage{thmtools}
\usepackage{bbm}

\usepackage{euscript} % \EuScript{B}
\usepackage{mathrsfs} % \mathscr{A}
\theoremstyle{plain}
\newtheorem{proposition}{\textit{Proposition}}

\newtheorem{assumption}{\textit{Assumption}}

\theoremstyle{remark}

\newcommand{\textbi}[1]{\textbf{\textit{#1}}}
\newcommand{\TODO}[1]{\textbf{\color{red}[TODO: #1]}}
\definecolor{Green}{rgb}{0.0, 0.5372, 0.2040}

\usepackage[pagebackref=true,breaklinks=true,letterpaper=true,colorlinks,bookmarks=false]{hyperref}

\begin{document}

%%%%%%%%% TITLE
\title{Binding Biometrics with AI Agent Identifiers for Delegation of Authority}
% \texttt{BIND}: \underline{B}iometric \underline{I}dentifiers for \underline{N}on-repudiable  \underline{D}elegation of Authority \\ by Humans to AI Agents
 %\texttt{BIND}: \underline{B}iometric \underline{I}dentifiers for \underline{N}on-repudiable  \underline{D}elegation 
%\\ of Human Authority to AI Agents
%}
% \texttt{BIND}: \underline{B}iometric \underline{I}dentity framework for \underline{N}on-repudiable \underline{D}elegation 

\author{
Joseph Geo Benjamin \quad Anil K. Jain \quad Karthik Nandakumar\\
% \textit{Michigan State University, East Lansing, MI 48824, USA} \\
% {\tt\small {\{benja161, jain, nandakum\}@msu.edu}}
\textit{Michigan State University, MI, USA} \\
{\tt\small\textcolor{Green}{\textbf{\{benja161, jain, nandakum\}@msu.edu}}}
}

\maketitle
\thispagestyle{empty}

\begin{abstract}
The proliferation of agentic artificial intelligence (AI) systems has raised serious questions about the accountability for tasks performed by AI agents. Ideally, an AI agent must not be allowed to perform critical tasks without explicit authorization by a human operator. Since biometric recognition is one of the most reliable approaches for authenticating individuals, it has the potential to enable authenticated delegation of authority to AI agents. In this work, we present a framework called \texttt{\textbf{BIND}}, which leverages ideas from the field of biometric cryptosystems, to securely bind biometric data of the human user to the AI agent identity (ID) and authority scope (task-specific constraints) at the time of agent authorization. This token/identifier can be presented by the AI agent to an Identity Auditor, who simultaneously performs biometric authentication and recovers the agent ID and scope, thereby enabling real-time user authentication and establishing a non-repudiable proof of human control and delegation of authority. We also provide a practical implementation of the proposed \texttt{\textbf{BIND}} framework based on face features extracted using standard deep neural network models. To facilitate this implementation, we propose a feature adaptation module that transforms real-valued feature embeddings into fixed-length binary representations suitable for a fuzzy commitment construct based on turbo error correcting codes. Experiments demonstrate the practical feasibility of the proposed face cryptosystem, achieving a True Match Rate of $96\%$ at zero False Match Rate and supporting $1024$-bit agent tokens. 
\end{abstract}

% \begin{IEEEkeywords}
% Agentic AI
% \end{IEEEkeywords}

\section{Introduction}

%Earlier iterations of conversational AI were primarily limited to interpreting user inputs and generating responses such as explanations, recommendations, or code snippets.
Smart cities are propelled by Internet of Things (IoT) and these connected devices are now being increasingly controlled by Large Language Model (LLM)-powered AI agents~\cite{agentiot_rivkin2024aiot_smarthome}.  For instance, AI agents can optimize building energy systems~\cite{agentiot_ly2025smart_buildingoper} through live sensor streams and API-based tool invocation, and orchestrate real-time power grid operations across urban electrical infrastructure~\cite{agentiot_jin2025gridmind}. Agentic AI refers to autonomous systems capable of acting on behalf of humans~\cite{multiagent_du2024survey}. While traditional software systems execute deterministic logic and predefined routines, agentic AI systems often exhibit adaptive and non-deterministic behavior, making decisions based on evolving context and goals \cite{agents_yang2025adoption}. Recent advancements such as the Model Context Protocol (MCP)~\cite{url-MCP} have enabled AI agents to invoke tools, access resources, and act in real-world systems on behalf of humans.
Moreover, emerging protocols like Agent2Agent (A2A)~\cite{url-A2A} enable agents to collaborate, delegate sub-tasks, and form multi-step command chains within and across organizational boundaries. 
%These agentic AI systems are increasingly embedded in complex, distributed workflows involving high-stakes objectives. For instance, 

This growing adoption of agentic AI in smart city applications introduces many fundamental challenges. One of them is the need to maintain persistent and verifiable authorization by a human operator across delegated tasks carried out by AI agents~\cite{AIdeleg_south2025position}. This is critical to prevent rogue agents or individuals from acting maliciously and disrupting autonomous smart city ecosystems. This is also necessary for accountability, trust, and regulatory enforcement by linking agent actions to real-world entities that can be subject to existing legal frameworks~\cite{AIid_chan2025infrastructure}. A recent technical report from the OpenID Foundation~\cite{openid_south2025agenticidentity} highlights that existing identity frameworks are only suitable for relatively static single-entity interactions and cannot meet the emerging needs of agentic AI ecosystems.  In particular, better strategies are required for delegation of authority, identity propagation, and multi-agent coordination.

\begin{figure*}[t!]
    \centering
    \includegraphics[width=1\linewidth]{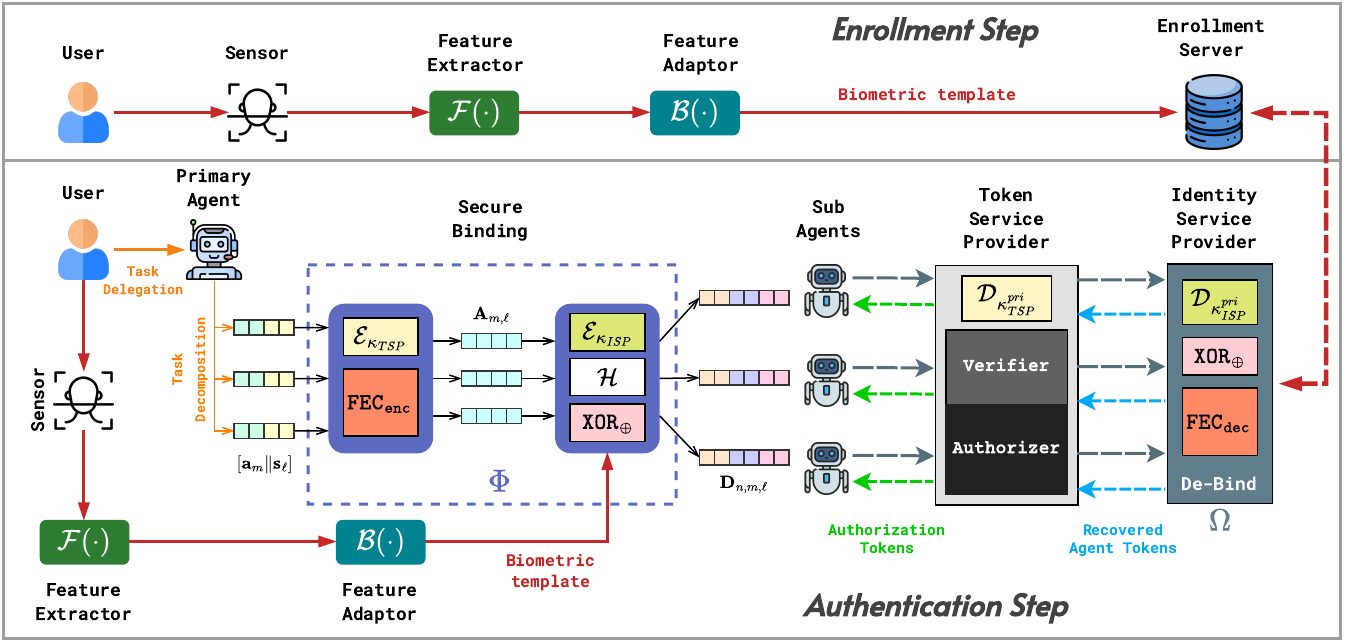}
    \caption{The flow diagram shows a user delegating a task to a primary agent, which decomposes the task into subtasks and generates task-specific Scope and Agent IDs. These identifiers are bound to the user’s biometric credentials and delegated to the corresponding sub-agents for their respective subtasks. Each sub-agent requests authorization from the TSP, which forwards the delegation token to the ISP for biometric de-binding and verification. Upon successful token recovery, the token is returned to the TSP, which validates the delegation and authorizes the requested action.}
    \label{fig:flowdiagram}
\end{figure*}

%% ---- Conventional authorization 
Modern identity frameworks typically isolate authentication and authorization functions. Identity authentication protocols like OpenID Connect (OIDC)~\cite{openid_chadwick2022verifybale_identity} establish user identity, while authorization frameworks like OAuth~\cite{oauth2_rfc6749} enable delegated authorization through scoped access tokens, supporting ``on-behalf-of'' workflows.
Existing mechanisms for controlled token transformation in delegated systems, such as OAuth 2.0 Token Exchange~\cite{oauth2TE_RFC8693}, provide a foundation for managing authorization across agent boundaries through token exchange, scope reduction, audience restriction, and limited token lifetimes. However, as delegation propagates across chains of tools and agents, alignment between individual actions and the authorizer's original intent becomes increasingly difficult to verify, since conventional token-based mechanisms only encode what a token permits and \textit{fails to capture which agent may act, when, or under what task context}~\cite{AIdeleg_south2025position}. Thus, agentic AI systems are exposed to issues like privilege escalation, confused deputy attacks, and unintended cross-boundary data disclosure because access trust is transitively extended across agents\cite{framework_ji2026taming}.

%Thus, conventional mechanisms are well-suited to provide authorization guarantees in client–server settings, but do not support per-agent credential scoping in multi-hop chains, runtime re-authorization as tasks evolve, or cross-agent audit trails that preserve traceability to the original authorization.

%% ---- Challenges in AI agents
Emerging frameworks such as Delegation Capability Tokens~\cite{framework_tomavsev2026intelligent} and IETF OAuth Identity Chaining~\cite{url-IETF_identity_chaining} extend the above foundations to enable structured, cryptographically verifiable authority transfer which can be used for multi-agent orchestration.
Agent communication protocols such as MCP~\cite{url-MCP} and A2A~\cite{url-A2A} are also progressively standardizing authentication and delegation mechanisms to address operational gaps across trust boundaries. For instance, in multi-agent workflows, \textit{OAuth access tokens may be forwarded between agents beyond the delegation boundary} originally authorized by the user, particularly in asynchronous pipelines. However, the core problem is that \textbf{\textit{no human-agent binding mechanism is enforced at the time of token issuance}}~\cite{AIdeleg_south2025position}. As authority transfers between agents, the link between actions, agents, and the authorizer weakens, making it unclear whether the user, orchestrator, or sub-agent bears responsibility for a given outcome~\cite{AIid_tallam2026propogation}. Hence, there is an urgent need to address the issue of \textbf{\textit{who}} remains accountable as delegation propagates. Thus, Identity Binding~\cite{AIid_chan2025infrastructure} can be characterized as: \emph{``Anchoring every delegated action to the originating human identity across all hops in the delegation chain to ensure accountability."}

%One challenge is characterization of IDs for AI agents, which must uniquely identify agent instances while being persistent across dynamic, ephemeral instances, encode delegation context and behavioural scope\cite{AIid_chan2024ids}.

%% ---- How Biometrics fits in
Biometric recognition has long been used as a verifiable, non-repudiable mechanism for establishing user identity~\cite{biom_clarke1994human,biombook_jain2011introduction}. While tokens or credentials generated by digital systems are inherently transferable and replayable, biometric signals are intrinsically bound to the human and can provide higher assurance of liveness and human presence. In this work, we leverage biometrics not only for user identity verification, but also as a mechanism for identity binding between humans and AI agents. Whenever the human operator delegates a new task to an agent or sub-agent, the agent/sub-agent ID and the task scope are encoded in the form of an \emph{Agent Token}, which is ``cryptographically'' bound to the biometric data of the user to generate a \emph{Delegation Token}. At the time of task execution, the agent presents the delegation token to an \emph{Identity Auditor}, who validates this token based on the enrolled biometric template of the user to retrieve the agent token, thereby providing a non-repudiable mechanism for both user authentication and agent authorization. This ensures that agent actions remain verifiably linked to the originating human, while also making the information encoded within the agent token resistant to tampering. This proposed framework is henceforth referred to as \underline{B}iometrics-based \underline{I}dentifiers for \underline{N}on-repudiable  \underline{D}elegation of Authority  (\texttt{\textbf{BIND}}).

%This can be achieved by invoking well-known primitives in the field of biometric cryptosystems \cite{}.

We also propose a practical instantiation of the \texttt{\textbf{BIND}} framework based on face biometric modality. Towards this end, we develop a novel \textit{feature adaptation} module that transforms real-valued face embeddings generated using existing deep neural network models into arbitrary-length binary representations through order-statistic quantization. The feature adaptation step is carefully designed to achieve a reasonable trade-off between preserving discriminability and achieving the desired invariance properties. The resulting binary face representations are securely bound to the agent token using the well-known \textit{fuzzy commitment} construct, enabled by turbo error correction coding schemes. The main contributions of this work are two-fold:
\begin{itemize}[leftmargin=*]
    \item We propose a framework called \texttt{\textbf{BIND}} for identity binding between humans and AI agents, which enables simultaneous user authentication and auditable delegation of authority by humans to AI agents.  
    \item We demonstrate the practical feasibility of the \texttt{\textbf{BIND}} framework based on face biometrics by developing a feature adaptation technique that bridges the representation gap between existing deep face embeddings and the well-known fuzzy commitment scheme. The best setting achieves a $96\%$ TMR at zero-FMR on the CFP-FF dataset and supports agent tokens of size $1024$ bits.
\end{itemize}

\section{Related Works}
\noindent \textbf{Agentic AI Protocols:}
Recent efforts in agentic AI have led to the development of a suite of open standards and protocols to enable interoperability, tool integration, communication, and identity among autonomous agents. Protocols such as MCP \cite{url-MCP} standardize agent access to external tools, APIs, and data sources. In parallel, A2A protocol \cite{url-A2A} facilitates peer-to-peer communication, capability discovery, and task delegation among agentic AI systems. The Agent Communication Protocol (ACP)~\cite{url-ACP} focuses on structured messaging between agents using REST/HTTP-based communication and multimodal support. The Agent Network Protocol (ANP)~\cite{url-ANP} provides network standards for decentralized discovery and collaboration, while the Open Agent Specification (Agent Spec)~\cite{url-AgentSpec} addresses consistent capability and metadata definitions.
%For specialized applications, commerce-focused protocols such as the Universal Commerce Protocol (UCP)~\cite{url-UCP_universalCommerceProtocol} and the x402 Microtransaction Protocol \cite{url-x402Foundation} define standards for agent-mediated transactions and lightweight payments.
Human-centered interaction is supported by the Agent-to-Human Protocol (A2H)~\cite{A2H_liang2025a2h} and Agent-to-UI (AG-UI / A2UI) \cite{url-googleA2UI} formalizes real-time agent–user engagement. The Agent Identity Protocol (AIP)\cite{url-AIP_agentidentityprotocol} is an emerging standard for agent identity and trust, providing verifiable identities, authentication, and policy enforcement. By enabling cryptographic identities, signed actions, and fine-grained authorization, it mitigates security risks from unconstrained agent behavior. Collectively, these protocols form a layered interoperability stack for agentic AI ecosystems \cite{survey_ehtesham2025agenticai_protocols}. However, the problem of secure delegation of authority by humans to agents has not been addressed in any of these existing protocols.

\noindent \textbf{Biometric Cryptosystems:}
Biometric template protection (BTP) refers to techniques that secure stored biometric representations against identity leakage and reconstruction attacks~\cite{biom_template-jain2005Challenges}. 
It is broadly achieved through two complementary approaches: \textit{cancelable biometrics}, which relies on non-invertible transformations applied to raw templates, and \textit{biometric cryptosystems}, which use cryptographic binding or key generation to protect biometric data~\cite{biom_protection-rathgeb2011survey}.
Schemes such as fuzzy commitment~\cite{fuzzy_juels1999fuzzy_commit} and fuzzy vault~\cite{fuzzy_juels2002fuzzy_vault} bind a cryptographic key to biometric data, allowing key recovery only when a sufficiently similar biometric is presented without exposing the underlying template. Several practical implementations of these constructs based on different biometric modalities have been proposed in the literature \cite{fuzzycommit_rane2013secure_survey,fuzzy_hao2006combining_iris, fuzzy_van2006face, fuzzy_jin2016biometric_finger}. Cancelable biometrics is also a key paradigm for template protection, designed to satisfy irreversibility, revocability, unlinkability, and performance preservation. Early approaches focused on transforming biometric features using mathematical mechanisms to prevent direct reconstruction~\cite{biom_template-ratha2001enhancing,hash_goh2003biohash,cancel_rathgeb2013alignment_bloom,ssk_biom-mai2020secureface}. More recent methods rely on deep learning ~\cite{cancel_kim2021ironmask} and generative AI models \cite{cancel_ghafourian2023otb_morphing,cancel_wang2025pp_FaceSupport} to achieve similar goals. To the best of our knowledge, the application of biometrics for human control of AI agents has not been explored well in the literature.

\section{\texttt{BIND} Framework}

\subsection{Problem Formulation}

Let $\mathcal{U} = \{\mathbf{u}_1, \mathbf{u}_2, \cdots, \mathbf{u}_N\}$ be the set of humans interacting with AI agents $\mathcal{A} = \{\mathbf{a}_1, \mathbf{a}_2, \cdots, \mathbf{a}_M\}$ in an organization/system environment. Here, $\mathbf{u}_n$ represents the user ID, $\mathbf{a}_m$ denotes the agent ID, $N$ is the total number of users, and $M$ is the total number of agents. We assume that each human is enrolled with an identity service provider (ISP) based on their biometric traits. Let $\mathbf{b}_n$ represent the biometric template of user $\mathbf{u}_n$ stored in the ISP database. The agents in $\mathcal{A}$ are capable of carrying out various tasks on behalf of users and can interact with tools or resources or even another agent internally or in the external environment. For example, the internal environment may consist of user devices such as laptops or mobile phones, whereas the external environment may correspond to application platforms such as banking systems or travel services.

Let $\mathbf{s}_\ell$ represent the agent scope for the specified task (including the authority delegated by the user to the agent). In this work, our first goal is to design a binding function $\Phi$ that can create a secure and verifiable delegation token $\mathbf{D}_{n,m,\ell}$ for a specific agent $\mathbf{a}_m$ with defined scope $\mathbf{s}_\ell$ for future authorization and auditing purposes. This delegation token is obtained by binding freshly acquired biometric data $\mathbf{\tilde{b}}_n$ from user $\mathbf{u}_n$ with the agent ID and scope as follows:
\begin{equation}
    \begin{aligned}
       \mathbf{D}_{n,m,\ell} := \Phi(\mathbf{\tilde{b}}_n, \mathbf{a}_{m},  \mathbf{s}_{\ell})
    \end{aligned}
\end{equation}

In our formulation, successful authentication and delegation of authority is defined as the correct recovery of the agent ID and scope from the delegation token using only the corresponding user biometric template as the key. The recovered agent details can be used to establish credentials and issue session tokens within standard OAuth flows~\cite{oauth2_rfc6749}, but the exact formulation of these later mechanisms is beyond the scope of this work. Let $\Omega$ denote the de-binding function that can recover the agent ID and scope only upon successful biometric authentication by a valid user. When the authentication fails, the recovery should fail and a garbage value ($\perp$) must be returned by $\Omega$. Thus, 

\begin{equation}
    \begin{aligned}
     \Omega(\mathbf{D}_{n,m,\ell},\mathbf{b}_*) :=
    \begin{cases}
    (\mathbf{a}_{m},\mathbf{s}_\ell)\;, & \text{if } d(\mathbf{b}_*,\mathbf{\tilde{b}}_n) \leq \eta \\
    \perp, & \text{otherwise},
    \end{cases}
    \end{aligned}
\end{equation}

\noindent where $d$ is an appropriate distance metric between two biometric representations and $\eta$ is the decision threshold. The goal of this work is to design the binding ($\Phi$) and de-binding ($\Omega$) functions to satisfy the above requirements as well as withstand various adversarial threats described below.

\subsection{Threat Model}

The generation and lifecycle management of identifiers within AI systems exposes multiple attack surfaces that can be exploited by adversarial actors, including malicious humans and rogue AI agents within the system. Broadly, these threats can be categorized into three types ~\cite{AIid_chan2024ids, AIdeleg_south2025position}.
\begin{enumerate}[leftmargin=*]
    \item \textbi{Tampering:} An adversary alters the delegation token while it is transmitted from the user to the agent and then to the service provider, potentially de-linking the user-agent identity binding or altering the scope. 
    \item  \textbi{Identity Spoofing:} An adversary fabricates a fraudulent user or agent ID and falsely presents the delegation token as originating from a trusted user or agent, thereby impersonating a valid delegation session or shifting responsibility to another entity.
    \item \textbi{Instance Spoofing:} An adversary takes a delegation token issued by a valid user and applies it to a different, unauthorized delegation instance involving a different agent or user, thereby misusing the original delegation. 
\end{enumerate}

\noindent All these threats will eventually result in repudiation claims and erode the trust in the entire framework. Thus, any identity binding framework must be robust against these threats.

\subsection{Proposed Solution}

In this work, we leverage the fuzzy commitment construct \cite{fuzzy_juels1999fuzzy_commit, ssk-biom_nandakumar2010fingerprint} from the biometric template protection (BTP) literature to achieve user-agent identity binding. Fuzzy commitment is typically used for BTP in the following way. The enrolled biometric template is bound to an error correcting codeword (indexed by a secret key) to obtain a secure sketch. During authentication, the secure sketch along with the biometric query is used to recover the codeword, and hence the secret key. The authentication is successful if the secret key can be recovered correctly. Since it is computationally hard to disentangle the template or the codeword from the secure sketch, the template remains protected.

In the proposed identity binding framework, we make two key changes to the fuzzy commitment construct. First, the agent ID and scope are encoded in the form of an \textbf{\textit{Agent Token}}, which acts as the secret key that must be protected. Second, rather than using the secure sketch to protect the biometric template, we generate the secure sketch by binding the codeword indexed by the agent token with the biometric query. This secure sketch acts as the \textbf{\textit{Delegation Token}} provided by the human to the AI agent. The biometric template is stored in the ISP database in plaintext form. If template protection is also desired, other BTP approaches, such as cancelable biometrics, can be applied to secure the enrolled biometric template. During execution of the task by an agent, the agent presents the delegation token to the ISP. The ISP uses the biometric template in conjunction with the delegation token (secure sketch) to recover the agent token. Successful recovery of the agent token signifies both successful authentication of the user and delegation of authority by the user to the agent.

Now, We illustrate the proposed identity binding framework through a simple workflow.
Suppose that user $\mathbf{u}_n$ wants to assign a task to agent $\mathbf{a}_m$ in the system. The agent interprets the assigned task and scopes out the privileges required to perform the task. This scope is encoded in the form of binary representation $\mathbf{s}_\ell$. Note that $\mathbf{s}_\ell$ may include time limits or access rights required for executing the assigned task as well as other auxiliary information (e.g., one-time pads to ensure token liveliness, etc.)\cite{AIid_chan2024ids} and the details of this scope encoding process are beyond our scope. The agent ID $\mathbf{a}_m$ and scope encoding $\mathbf{s}_\ell$ are presented to the user, who generates the agent token $\mathbf{A}_{m,\ell}$ as follows:

\begin{equation}
    \mathbf{A}_{m,\ell} = \mathtt{{FEC}_{enc}}\left(\mathcal{E}_{\kappa_{\mathrm{TSP}}}\left([\mathbf{a}_m||\:\mathbf{s}_\ell]\right)\right),
\end{equation}
where $\mathtt{{FEC}_{enc}}$ is the encoder module of a forward error correction (FEC) scheme that generates a valid codeword indexed by its input $x$, $\mathcal{E}$ is the encryption function of a public key cryptosystem, $\kappa_{\mathrm{TSP}}$ is the public key of a token service provider (TSP) such as OAuth, and $||$ denotes the concatenation operation. Next, a fresh biometric sample is acquired from the user $\mathbf{u}_n$ and the extracted features $\mathbf{\tilde{b}}_n$ are used to generate the secure sketch as:

\begin{equation}
    \mathbf{S}_{n,m,\ell} = \mathbf{\tilde{b}}_n \oplus \mathbf{A}_{m,\ell}
\end{equation}
where $\oplus$ denotes the exclusive-OR operation. Finally, the delegation token is obtained as:

\begin{equation}
    \mathbf{D}_{n,m,\ell} = \Phi(\mathbf{\tilde{b}}_n, \mathbf{a}_{m},  \mathbf{s}_{\ell}) := \left[\mathbf{S}_{n,m,\ell} || \:\mathcal{E}_{\kappa_{\mathrm{ISP}}}(\mathbf{u_n}) \:||\: \mathbf{H}_{m,\ell}\right],
\end{equation}
where $\kappa_{\mathrm{ISP}}$ is the public key of the ISP and $\mathbf{H}_{m,\ell}$ is the cryptographic hash of the input to the FEC encoder, generated by $\mathcal{H}$. The delegation token is provided to the agent $\mathbf{a}_m$ for task execution. For a complex task that requires multiple agents to work together, the primary (planner) agent can request agent/task-specific delegation tokens from the user and all these delegation tokens can be created together. For adaptive scenarios where tasks are assigned to sub-agents on the fly, sub-agents must revert back multiple times to the user for delegation of authority. Although this could be a limitation, this inefficiency can be mitigated by requiring explicit delegation by the user only for critical tasks.

During task execution, the agent $\mathbf{a}_m$ presents the delegation token to the ISP, who decrypts the user identity $\mathbf{u}_n$ with function $\mathcal{D}$ using its private key ${\kappa^\mathrm{pri}_{\mathrm{ISP}}}$ and retrieves the stored biometric template $\mathbf{b}_n$ corresponding to the user ID. The ISP performs authentication as follows:

\begin{equation}
    \Omega(\mathbf{D}_{n,m,\ell},\mathbf{b}_n) :=  \mathtt{{FEC}_{dec}}\left(\mathbf{b}_n \oplus \mathbf{S}_{n,m,\ell}\right),
\end{equation}
where $\mathtt{{FEC}_{dec}}$ is the corresponding decoder module of the FEC scheme. The above error correction decoding will be successful if and only if $\mathbf{b}_n$ and $\mathbf{\tilde{b}}_n$ are sufficiently close, i.e., Hamming distance between the template and query binary biometric representations is less than the error correction capability of the selected FEC scheme. In this case, the correct value of $\mathcal{E}_{\kappa_{\mathrm{TSP}}}\left([\mathbf{a}_m||\:\mathbf{s}_\ell]\right)$ will be recovered, which can be verified by computing its hash and comparing with $\mathbf{H}_{m,\ell}$. If the authentication is successful, the ISP logs the user ID $\mathbf{u}_n$ along with $\mathbf{H}_{m,\ell}$ to facilitate future audits. Thus, the ISP also plays the role of an identity auditor. 

The ISP forwards the value of $\mathcal{E}_{\kappa_{\mathrm{TSP}}}\left([\mathbf{a}_m||\:\mathbf{s}_\ell]\right)$ to the TSP, which decrypts this information with decryption function $\mathcal{D}$ using the key ${\kappa^{\mathrm{pri}}_{\mathrm{TSP}}}$ to obtain the agent ID and scope in plaintext form and issues the appropriate session credentials to the agent to execute the task. Furthermore, the TSP logs the agent ID and scope for future audits. Thus, any action taken by AI agents can be traced by the TSP to the specific agent ID, which in turn can link back to the ISP to determine the human who authorized the action. In summary, the proposed framework enables the creation of a biometric identifier for non-repudiable delegation of authority by humans to AI agents. 

\section{Face-based Implementation of \texttt{BIND}}

We now present a practical implementation of the \texttt{\textbf{BIND}} framework based on the face biometric modality. Specifically, we employ Turbo codes as the FEC scheme for fuzzy commitment. To meet the invariance requirements of the Turbo code instantiation used in this work, we propose a feature adaptation technique that transforms face embeddings generated from well-known deep neural network models into binary representations of arbitrary length. 

\subsection{Error Correction Codes}
Let $\mathbf{x} \in \{0,1\}^k$ denote an message sequence of length $k$ bits. Forward error correction codes systematically append redundant parity bits to the original message, thereby encoding the source sequence into a higher-dimensional codeword space. The resulting redundancy enables a decoder to reconstruct the original message $\mathbf{x}$ even from a noisy codeword. A standard Turbo Code~\cite{ecc_berrou1993TurboCode} consists of two parallel-concatenated recursive systematic convolutional encoders coupled through a pseudo-random interleaver. The output codeword length is $r=3k$ for rate$=1/3$ and $r=2k$ for rate$=1/2$. Then, the codeword $\mathbf{c}$ is given by:

\begin{equation}
    \mathbf{c} = \mathtt{TC_{enc}} \left(\mathbf{x}\right) \quad \text{where}\quad \mathbf{c} \in \{0,1\}^r
\end{equation}

Let $\mathbf{y}$ denote the received noisy vector obtained from the codeword $\mathbf{c}$ after a stochastic process. This process is typically modeled as signal transmission with additive noise $\mathbf{e}$ introduced onto $\mathbf{c}$ such that $\mathbf{y} = \mathbf{c}\,\oplus\,\mathbf{e}$. \textit{However, in our setting, $\mathbf{e}$ captures intra-user biometric variability}. Specifically, it denotes bit-level discrepancies between enrolled biometric template and the query biometric features introduced during the secure sketch recovery process.

The Turbo decoder $\mathtt{TC_{dec}}(\cdot)$ performs iterative decoding using two soft-input/soft-output decoders connected via the interleaver. The recovered codeword is obtained as:
\begin{equation}
\mathbf{\hat{x}} = \mathtt{TC_{dec}}(\mathbf{y}),
\end{equation}
%At each iteration, the decoders exchange log-likelihood ratio information to progressively refine the bit estimates. After $T$ decoding iterations, 
where $\mathbf{y}$ denotes the corrupted codeword. In this work, we use the off-the-shelf Turbo Code provided by NVIDIA’s Sionna library\footnote{\url{https://github.com/NVlabs/sionna}} for error correction coding \cite{lib_hoydis2022sionna}. A given Turbo code can only correct up to $\tau\%$ of bit errors, which is the operating characteristic specific to the code design. Moreover, $\tau$ is a function of the code rate, and lower rates allow for larger $\tau$. Thus, the bit errors caused due to biometric intra-user variability must lie within the $\tau$ bound.

\subsection{Face Feature Adaptation}

Let $\mathbf{f}_n \in \mathbb{R}^D$ be a face feature vector (embedding) extracted from a face image of user $\mathbf{u}_n$ using an existing deep neural network model $\mathcal{F}$ \cite{face_deng2019arcface}. Typically, these face embeddings are real-valued $D$-dimensional vectors, and the similarity between two face embeddings is computed based on cosine similarity. In order to ensure compatibility with the fuzzy commitment construct, these face embeddings must be converted into binary representations of length $r$ bits, where $r$ is determined by the selected FEC scheme. 
Moreover, the proportion of bit errors between representations extracted from two biometric samples belonging to the same user must be less than $\tau$. In contrast, the bit error rate for representations coming from different users must be greater than $\tau$. Thus, there is a need for feature adaptation techniques $\mathcal{B}:\mathbb{R}^D \rightarrow \{0,1\}^r$ that transform $\mathbf{f}_n$ to $\mathbf{b}_n$, where $\mathbf{b}_n = \mathcal{B}(\mathbf{f}_n)$, while satisfying the above constraints.

\begin{figure}
    \centering
    \includegraphics[width=1\linewidth]{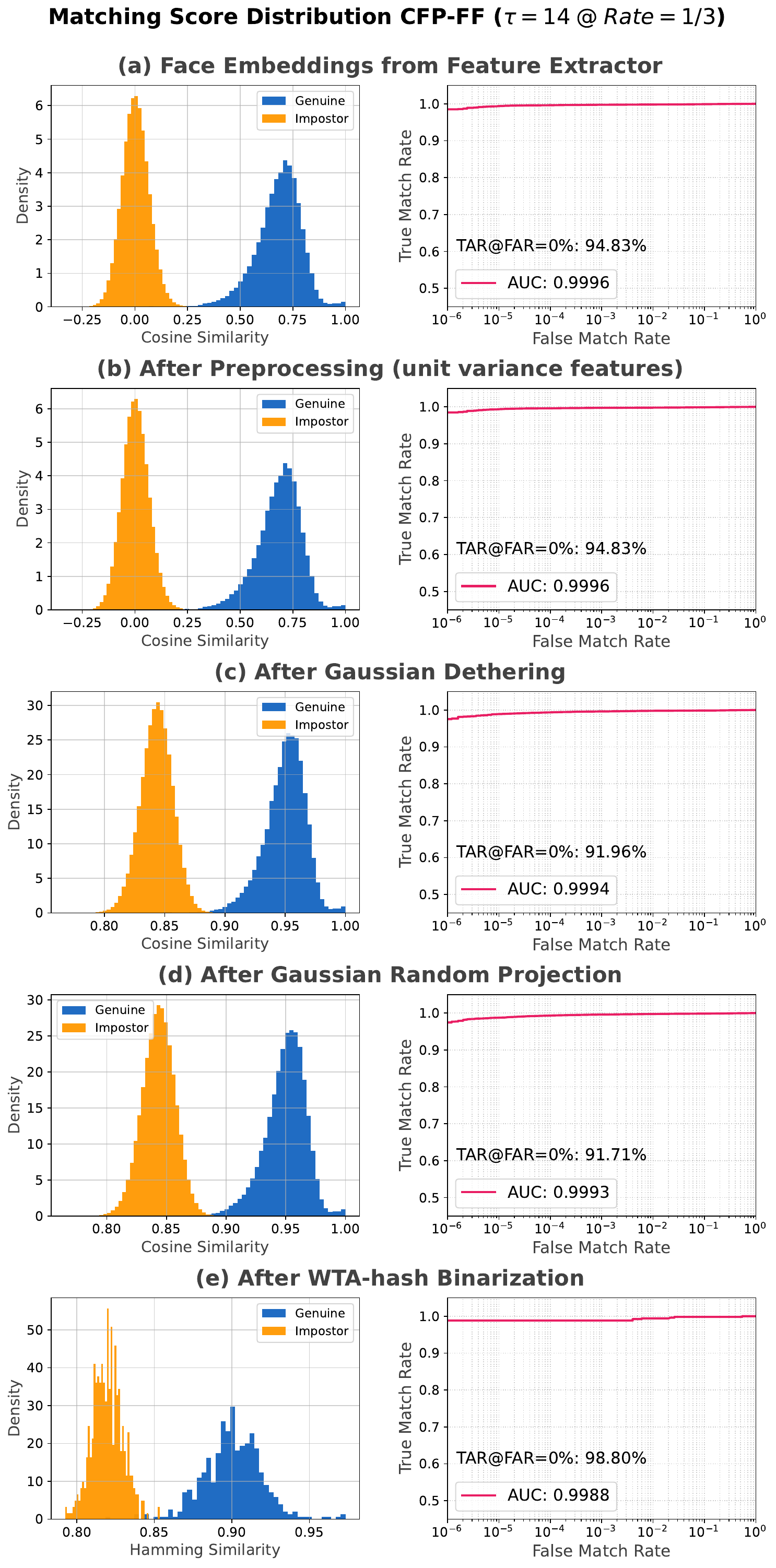}
    \caption{Effect of each step in the Feature Adaptation and Binarization pipeline. Each row shows the resulting distribution shift. The first column shows genuine and impostor scores, and the second shows the ROC curve (log scale). Results indicate that discriminability is preserved without performance degradation.}
    \label{fig:steps-effect}
\end{figure}

We introduce the following feature adaptation method that reduces biometric variability (lowering the bit error rate) while preserving the separability between genuine and impostor pairs. Feature adaptation involves three steps:
\begin{enumerate}[leftmargin=*]
    \item \textbi{Gaussian Dithering:} Superimposes a scaled standard Gaussian random vector onto biometric embeddings. This reduces the effective Hamming distance between biometric templates used for secure sketch, thereby brings the intra-user biometric variability under the error correction capability of the FEC scheme.
    \item \textbi{Gaussian Random Projection:} Projects the signal into a higher-dimensional isotropic space of $B$-dimension. In addition to increasing the template dimensionality from $D$ to any arbitrary $B$, this approach also improves invertibility hardness, as studied in IoM-hashing \cite{cancel_lsh_jin2017ranking_IoM}.
    \item \textbi{WTA-Hash Binarization:} Applies a non-linear, rank-based binarization that improves separation between genuine and impostor distributions by enhancing quantization stability under small perturbations. The non-linear nature of the transformation further increases resistance to inversion attacks.
\end{enumerate}

\begin{assumption}\label{th:normal_embedding}
The biometric feature vectors $\mathbf{f}_n$ are assumed to follow a Gaussian distribution with zero mean and covariance $\sigma_f^2 \mathbf{I}$, where $\sigma_f^2$ depends on the feature extractor $\mathcal{F}$, \ie $\mathbf{f}_n \sim \mathcal{N}(\mathbf{0}, \sigma_f^2 \mathbf{I})$

\end{assumption}

% As a preprocessing step, we perform unit normalization and scale the feature vectors by $\sqrt{D}$. Since the variance of a unit-norm vector is approximately $1/D$, this transformation results in features with approximately unit variance, making them suitable for subsequent analysis. 
We preprocess feature vectors by unit-normalizing and scaling them by $\sqrt{D}$. Since unit-norm vectors have variance of approximately $1/D$, this produces approximately unit-variance features for subsequent analysis.
The feature transformation involves a user-specific \textit{Transformation Key} $\mathbf{K}_n=\{\mathbf{W}_n,\mathbf{g}_n,\Pi_n^{[1]},\Pi_n^{[2]}\}$. 
This transformation key acts as the second factor of authentication, which typically needs to be stored by the user and must be presented during authentication. Moreover, this key $\mathbf{K}_n$ provides cancelability to the enrolled biometric template. 
The overall transformation process can be represented by the following function:

\begin{equation}
\mathbf{b}_n=\xi(\mathbf{\widehat{f}}_n,\mathbf{K}_n), \quad\text{where}\quad \mathbf{\widehat{f}}_n = \frac{\mathbf{f}_n}{\Vert\mathbf{f}_n\Vert}\cdot \sqrt{D}
\end{equation}

\noindent Feature transformation process $\xi$ first projects the perturbed biometric feature into an intermediate representation as follows:
\begin{equation}
\mathbf{q}_n = \mathbf{W}_n\left(\mathbf{\widehat{f}}_n+\lambda\mathbf{g}_n\right),
\end{equation}
where $\mathbf{g}_n \in \mathbb{R}^{D}$ denotes a Gaussian dithering vector, $\mathbf{W}_n \in \mathbb{R}^{2\times r\times D}$ is a Gaussian random projection matrix, and $\mathbf{q}_n \in \mathbb{R}^{2\times r}$ is the intermediate representation. The intermediate representation is intentionally generated at twice the target template length, as required by the subsequent WTA-hashing operation. Next, the WTA-hashing is applied to the projected features. Independent random permutations are first applied to the two projection sets, after which the corresponding elements are compared to generate the final $r$-bit binary template $\mathbf{b}_n$:

\begin{equation}
\mathbf{b}_n= \mathbbm{1}\!\left[
\Pi_n^{[1]}(\mathbf{q}_n^{[1]})
<
\Pi_n^{[2]}(\mathbf{q}_n^{[2]})
\right],
\end{equation}
where $\mathbbm{1}[\cdot]$ is the indicator function applied element-wise. Here, $\Pi_n^{[1]}$ and $\Pi_n^{[2]}$ denote two independently sampled permutations over the index set $\{1,\cdots,r\}$. The permutations randomize the ordering of the projected coefficients prior to comparison, thereby introducing additional randomness into the generated binary template.

\subsection{Determining the Optimal Dithering Factor $\big(\lambda\big)$}

During feature transformation, the Gaussian dithering step can shift the similarity in the real-valued biometric representation $\mathbf{f}$ such that the final template error is bounded within $\tau$. The amount of shift in $\mathbf{f}$ can be controlled by the $\lambda$ value. For a chosen operating threshold $\psi$ derived from the genuine–impostor distribution of $\mathbf{f}$, the corresponding selection of $\lambda$ based on $\tau$ can be formulated by chaining the similarity shifts introduced at each step of the feature adaptation process. This allows the derivation of a single expression that relates $\lambda$ to $\tau$ under the constraint imposed by the operating threshold $\psi$.

\begin{proposition}[Effect of Gaussian Dithering]
Let $\mathbf{x}, \mathbf{y} \sim \mathcal{N}(\mathbf{0}, \mathbf{I}_D)$ be two high-dimensional vectors with cosine similarity $\psi$. Let $\mathbf{g} \sim \mathcal{N}(\mathbf{0}, \mathbf{I}_D)$ be a random noise vector independent of both $\mathbf{x}$ and $\mathbf{y}$. Consider the perturbed vectors
$\mathbf{x}' = \mathbf{x} + \lambda \mathbf{g}, \quad \mathbf{y}' =\mathbf{y} + \lambda \mathbf{g}$.
Then, for sufficiently large $D$, the cosine similarity $\psi^g$ between $\mathbf{x}'$ and $\mathbf{y}'$ satisfies:
\[
\psi^g \approx \frac{\psi + \lambda^2}{1 + \lambda^2}.
\]
\end{proposition}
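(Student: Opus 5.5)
We prove the approximation by expanding the inner product and the two squared norms of the perturbed vectors, and replacing each random quadratic form by its leading-order value. Concentration of measure makes those values sharp for large $D$. Throughout, we treat $\mathbf{x},\mathbf{y}$ as fixed vectors drawn from $\mathcal{N}(\mathbf{0},\mathbf{I}_D)$ with $\langle \mathbf{x},\mathbf{y}\rangle = \psi\,\Vert\mathbf{x}\Vert\,\Vert\mathbf{y}\Vert$. The dither $\mathbf{g}$ is independent of both.

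\emph{Step 1 (expansion).} By bilinearity,
\begin{equation*}
\langle \mathbf{x}',\mathbf{y}'\rangle = \langle \mathbf{x},\mathbf{y}\rangle + \lambda\langle \mathbf{x},\mathbf{g}\rangle + \lambda\langle \mathbf{y},\mathbf{g}\rangle + \lambda^2\Vert\mathbf{g}\Vert^2 ,
\end{equation*}
and
\begin{equation*}
\Vert\mathbf{x}'\Vert^2 = \Vert\mathbf{x}\Vert^2 + 2\lambda\langle\mathbf{x},\mathbf{g}\rangle + \lambda^2\Vert\mathbf{g}\Vert^2 ,
\end{equation*}
with the analogous expression for $\Vert\mathbf{y}'\Vert^2$.

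\emph{Step 2 (concentration).} Since the coordinates are i.i.d.\ standard normal, the law of large numbers (or a $\chi^2_D$ tail bound) gives $\Vert\mathbf{x}\Vert^2/D$, $\Vert\mathbf{y}\Vert^2/D$, $\Vert\mathbf{g}\Vert^2/D \to 1$, with fluctuations of order $D^{-1/2}$. Conditionally on $\mathbf{x}$, the cross term $\langle\mathbf{x},\mathbf{g}\rangle$ is distributed as $\mathcal{N}(0,\Vert\mathbf{x}\Vert^2)$, so $\langle\mathbf{x},\mathbf{g}\rangle/D = O_p(D^{-1/2})$; the same holds for $\langle\mathbf{y},\mathbf{g}\rangle$. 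By assumption, $\langle\mathbf{x},\mathbf{y}\rangle/D = \psi\,\Vert\mathbf{x}\Vert\,\Vert\mathbf{y}\Vert/D \to \psi$.

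\emph{Step 3 (combine).} Divide the numerator and both norms by $D$ and substitute the limits from Step 2. The numerator tends to $\psi+\lambda^2$, and each squared norm tends to $1+\lambda^2$. The continuous mapping theorem, applied to the function $(a,b,c)\mapsto a/\sqrt{bc}$ (continuous wherever $b,c>0$), then gives
\begin{equation*}
\psi^g = \frac{\psi+\lambda^2}{1+\lambda^2} + O_p(D^{-1/2}),
\end{equation*}
which is the claimed approximation.

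The main obstacle is the interpretation, not the calculation: we must make precise what "cosine similarity $\psi$" means for random vectors. Our plan is to condition on a pair $(\mathbf{x},\mathbf{y})$ with empirical cosine $\psi$, and to use only $\Vert\mathbf{x}\Vert^2/D,\ \Vert\mathbf{y}\Vert^2/D\approx 1$, which we justify from Assumption~\ref{th:normal_embedding} together with the paper's $\sqrt{D}$ normalization. Under that normalization the norms are in fact exactly $\sqrt{D}$, so the conditioning costs nothing. The only genuinely random quantities are then the terms involving $\mathbf{g}$, and their concentration is immediate because $\mathbf{g}$ is independent of $(\mathbf{x},\mathbf{y})$.
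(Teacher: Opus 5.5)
Your proposal is correct and follows the same route as the paper's proof. Both expand $\langle \mathbf{x}',\mathbf{y}'\rangle$, $\Vert\mathbf{x}'\Vert^2$ and $\Vert\mathbf{y}'\Vert^2$ by bilinearity, replace $\langle\mathbf{x},\mathbf{g}\rangle$ and $\langle\mathbf{y},\mathbf{g}\rangle$ by $0$, replace $\Vert\mathbf{g}\Vert^2$, $\Vert\mathbf{x}\Vert^2$ and $\Vert\mathbf{y}\Vert^2$ by $D$, and simplify. Your version makes the paper's informal ``$\approx$'' steps precise via $\chi^2_D$ concentration, the conditional law $\langle\mathbf{x},\mathbf{g}\rangle\mid\mathbf{x}\sim\mathcal{N}(0,\Vert\mathbf{x}\Vert^2)$, and the continuous mapping theorem, and it also settles what ``cosine similarity $\psi$'' means for the Gaussian vectors.

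One small point: the continuous mapping theorem by itself gives only convergence in probability. The $O_p(D^{-1/2})$ rate you state also needs $(a,b,c)\mapsto a/\sqrt{bc}$ to be locally Lipschitz near $(\psi+\lambda^2,1+\lambda^2,1+\lambda^2)$, which it is.
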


\begin{proposition}[Effect of Gaussian Random Projection]
Let $\mathbf{x}, \mathbf{y} \sim \mathcal{N}(\mathbf{0}, \mathbf{I}_D)$ be two high-dimensional vectors with cosine similarity $\psi^g$. Let $\mathbf{W} \in \mathbb{R}^{r \times D}$ be a random projection matrix independent of both $\mathbf{x}$ and $\mathbf{y}$, where each entry satisfies $\mathbf{W}^{[i,j]} \sim \mathcal{N}(0,1)$. Consider the projected vectors
$\mathbf{x}' = \mathbf{Wx}, \quad \mathbf{y}' = \mathbf{Wy}$.
Then, for sufficiently large $D$ and $r$, the cosine similarity $\psi^p$ between $\mathbf{x}'$ and $\mathbf{y}'$ satisfies:
\[
\psi^p \approx \psi^g.
\]
\end{proposition}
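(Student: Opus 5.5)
The plan is to show that, conditional on $\mathbf{x}$ and $\mathbf{y}$, the random projection $\mathbf{W}$ approximately preserves the three quantities that define the cosine similarity: $\Vert\mathbf{x}\Vert^2$, $\Vert\mathbf{y}\Vert^2$ and $\langle \mathbf{x},\mathbf{y}\rangle$. The ratio should then concentrate around $\psi^g$. This is the Johnson--Lindenstrauss-style mechanism, and we only need its first- and second-moment version.

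\textbf{Step 1 (moments).} Fix $\mathbf{x},\mathbf{y}$ and let $\mathbf{w}_i^\top$ denote the $i$-th row of $\mathbf{W}$. The rows are i.i.d.\ $\mathcal{N}(\mathbf{0},\mathbf{I}_D)$, and we write
\[
\langle \mathbf{Wx},\mathbf{Wy}\rangle=\sum_{i=1}^r (\mathbf{w}_i^\top\mathbf{x})(\mathbf{w}_i^\top\mathbf{y}).
\]
The pair $(\mathbf{w}_i^\top\mathbf{x},\mathbf{w}_i^\top\mathbf{y})$ is bivariate Gaussian with covariance entries $\Vert\mathbf{x}\Vert^2$, $\Vert\mathbf{y}\Vert^2$ and $\langle\mathbf{x},\mathbf{y}\rangle$. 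Consequently each summand has mean $\langle\mathbf{x},\mathbf{y}\rangle$. Using Isserlis' theorem, its variance is
\[
\Vert\mathbf{x}\Vert^2\Vert\mathbf{y}\Vert^2+\langle\mathbf{x},\mathbf{y}\rangle^2 .
\]
The same computation with $\mathbf{y}=\mathbf{x}$ gives
\[
\mathbb{E}\Vert\mathbf{Wx}\Vert^2=r\Vert\mathbf{x}\Vert^2,\qquad \mathrm{Var}=2r\Vert\mathbf{x}\Vert^4,
\]
so $\Vert\mathbf{Wx}\Vert^2/\Vert\mathbf{x}\Vert^2$ is exactly $\chi^2_r$.

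\textbf{Step 2 (concentration).} Dividing each of the three sums by $r$, the weak law of large numbers (or Chebyshev's inequality with the variances above) gives the following. The quantities
\[
\tfrac1r\langle\mathbf{Wx},\mathbf{Wy}\rangle,\qquad \tfrac1r\Vert\mathbf{Wx}\Vert^2,\qquad \tfrac1r\Vert\mathbf{Wy}\Vert^2
\]
converge to $\langle\mathbf{x},\mathbf{y}\rangle$, $\Vert\mathbf{x}\Vert^2$ and $\Vert\mathbf{y}\Vert^2$ respectively. The relative fluctuations are $O(1/\sqrt r)$. For the inner product this fluctuation is of order $\Vert\mathbf{x}\Vert\Vert\mathbf{y}\Vert\sqrt{(1+\psi_g^2)/r}$, which is the error that lands on the cosine.

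\textbf{Step 3 (ratio).} We write
\[
\psi^p=\frac{\tfrac1r\langle\mathbf{Wx},\mathbf{Wy}\rangle}{\sqrt{\tfrac1r\Vert\mathbf{Wx}\Vert^2\cdot\tfrac1r\Vert\mathbf{Wy}\Vert^2}}.
\]
The map $(a,b,c)\mapsto a/\sqrt{bc}$ is continuous at the limit point, where $b,c>0$ almost surely. The continuous mapping theorem (or a delta-method expansion, which gives an $O_p(1/\sqrt r)$ error) then yields $\psi^p\to\psi^g$. This establishes $\psi^p\approx\psi^g$ for large $r$.

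The role of ``large $D$'' and the Gaussian assumption on $\mathbf{x},\mathbf{y}$ is only to make $\Vert\mathbf{x}\Vert^2\approx D$, so that the norms are non-degenerate and comparable. This matches the normalization $\widehat{\mathbf{f}}_n$ used in the paper; the argument is otherwise conditional on $\mathbf{x},\mathbf{y}$.

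The main obstacle is Step 3. Turning concentration of the numerator and denominators into a clean statement about the ratio needs either an asymptotic statement or an explicit high-probability bound. I would first try a union bound over three Chebyshev (or $\chi^2$/sub-exponential Bernstein) events. This would give
\[
|\psi^p-\psi^g|\le C\epsilon \quad\text{with probability } 1-O\!\left(e^{-cr\epsilon^2}\right),
\]
which suffices to justify the ``$\approx$''.
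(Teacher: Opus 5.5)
Your proposal is correct and follows essentially the same route as the paper. The paper writes $\mathbf{W}^\top\mathbf{W}=\sum_i\mathbf{w}_i\mathbf{w}_i^\top$, uses $\mathbb{E}[\mathbf{w}_i\mathbf{w}_i^\top]=\mathbf{I}_D$ to assert $\mathbf{W}^\top\mathbf{W}\approx r\mathbf{I}_D$, substitutes this into the inner product and both norms, and cancels $r$. Your Isserlis variances, the law of large numbers for the three scalar quadratic forms with $\mathbf{x},\mathbf{y}$ fixed, and the continuous-mapping step for the ratio make that heuristic precise.

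Your conditional scalar formulation is also the correct way to read the paper's step. As an operator-norm statement, $\mathbf{W}^\top\mathbf{W}\approx r\mathbf{I}_D$ would require $r\gg D$, whereas your argument needs only $r\to\infty$, which confirms that large $D$ plays no essential role.
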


\begin{proposition}[Effect of Pairwise WTA-Hashing]
Let $\mathbf{x}, \mathbf{y}$ be two high-dimensional vectors that are jointly Gaussian with i.i.d. coordinate pairs having correlation $\psi^p$, i.e. $(\mathbf{x}^{[k]}, \mathbf{y}^{[k]}) \sim \mathcal{N}(0, \Sigma)$, and $\mathtt{corr}(\mathbf{x}^{[k]}, \mathbf{y}^{[k]})=\psi^p$.
So, their cosine similarity converges to $\psi^p$ as $m$ becomes large. 

Let $\Pi_1, \Pi_2$ denote random perturbation operators acting on the coordinates of the vectors. Define the binary embeddings
$\mathbf{w} = \Pi_1(\mathbf{x}) < \Pi_2(\mathbf{x})$ and $\mathbf{v} = \Pi_1(\mathbf{y}) < \Pi_2(\mathbf{y})$
where the comparison is applied element-wise to produce binary vectors.
Then, for sufficiently large $r$, the Hamming similarity $\psi^h$ between $\mathbf{w}$ and $\mathbf{v}$ satisfies
\[
\psi^h \approx 1 - \frac{\arccos(\psi^p)}{\pi}.
\]
\end{proposition}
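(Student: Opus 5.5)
The plan is to reduce the Hamming similarity to a per-coordinate collision probability for a pair of jointly Gaussian vectors, and then apply the Grothendieck--Sheppard sign identity. Here $\Pi_1,\Pi_2$ are read as random \emph{permutations}, as in the construction of $\mathbf{b}_n$. Fix an output index $i$ and write $j=\Pi_1(i)$ and $k=\Pi_2(i)$. The $i$-th bits are $w_i=\mathbbm{1}[x_j<x_k]$ and $v_i=\mathbbm{1}[y_j<y_k]$. Set
\[
u = x_j - x_k, \qquad z = y_j - y_k .
\]
Then $w_i = v_i$ if and only if $\mathrm{sign}(u)=\mathrm{sign}(z)$.

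\textbf{Step 1: joint law of $(u,z)$.} Condition on $j\neq k$, i.e.\ the two independent permutations do not send $i$ to the same coordinate. This happens with probability $1-1/r$, and the case $j=k$ gives $w_i=v_i=0$, a vanishing correction as $r\to\infty$. Since the coordinate pairs $(x_j,y_j)$ and $(x_k,y_k)$ are i.i.d.\ and jointly Gaussian with common variances and correlation $\psi^p$, the pair $(u,z)$ is jointly Gaussian with zero mean. Its covariance is $2\Sigma$, so its correlation is again $\psi^p$. If the marginal variances of $x$ and $y$ differ, only a rescaling is needed, which does not change signs.

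\textbf{Step 2: sign agreement.} Apply the standard identity for zero-mean bivariate Gaussians,
\[
\Pr[\mathrm{sign}(u)\neq \mathrm{sign}(z)] = \frac{\arccos(\psi^p)}{\pi}.
\]
I would prove it by writing $(u,z)$ in whitened form as $(\langle \mathbf{h},\mathbf{a}\rangle, \langle \mathbf{h},\mathbf{c}\rangle)$. Here $\mathbf{h}\sim\mathcal{N}(\mathbf{0},\mathbf{I}_2)$, and $\mathbf{a},\mathbf{c}$ are unit vectors with angle $\theta=\arccos\psi^p$. The direction of $\mathbf{h}$ is uniform on the circle, and the two signs disagree exactly when the line orthogonal to $\mathbf{h}$ separates $\mathbf{a}$ and $\mathbf{c}$. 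That event has probability $\theta/\pi$. Hence $\Pr[w_i=v_i]=1-\arccos(\psi^p)/\pi$ for every $i$, up to the $O(1/r)$ term.

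\textbf{Step 3: concentration.} The Hamming similarity is
\[
\psi^h=\frac1r\sum_{i} \mathbbm{1}[w_i=v_i].
\]
By Step 2 its expectation is $1-\arccos(\psi^p)/\pi+O(1/r)$. It remains to show that $\psi^h$ concentrates around this value. The indicators are not independent, because different output indices can share a coordinate through the permutations, and this is the main obstacle. I would handle it with a variance bound. Two indicators $i\neq i'$ are independent unless their index pairs $\{\Pi_1(i),\Pi_2(i)\}$ and $\{\Pi_1(i'),\Pi_2(i')\}$ intersect. Each $i$ intersects with only $O(1)$ other indices: at most one $i'$ has $\Pi_1(i')=\Pi_2(i)$, and similarly for the other coincidences. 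So $\mathrm{Var}(\psi^h)=O(1/r)$, and Chebyshev gives convergence in probability.

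Alternatively, one could view the sum as a function of the $r$ independent coordinate pairs, with bounded differences of size $2/r$, and apply McDiarmid's inequality. Either route yields $\psi^h\approx 1-\arccos(\psi^p)/\pi$ for large $r$. The remark that the cosine similarity of $\mathbf{x},\mathbf{y}$ converges to $\psi^p$ follows from the law of large numbers and is not needed beyond identifying $\psi^p$ with the input similarity.
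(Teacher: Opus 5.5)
Your proposal is correct and follows essentially the same route as the paper's proof. You form per-bit differences $x_j-x_k$ and $y_j-y_k$, whose correlation is again $\psi^p$, apply Sheppard's formula, and then average over bits. Where you differ is only in rigor, and in your favor: the paper ignores the collision $\Pi_1(i)=\Pi_2(i)$ and simply asserts that the bits are i.i.d.\ Bernoulli, which is not literally true when both permutations act on the same vector, since each coordinate enters two comparisons. Your $O(1/r)$ correction and your dependency-count variance bound (or McDiarmid) actually justify the concentration step.
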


\noindent Proofs of these propositions are provided in the supplementary material, along with the derivation for Eq.~\ref{eq:lambda-finder}. For a distribution with operating similarity threshold $\psi$ using an error-correcting code capable of correcting a fraction $\tau$ of bit errors, the parameter $\lambda$ is chosen as follows:

\begin{equation}\label{eq:lambda-finder}
\lambda = \sqrt{\frac{\mathtt{cos}(\pi \tau) - \psi}{1 - \mathtt{cos}(\pi \tau)}}.
\end{equation}

\begin{figure}
    \centering
    \includegraphics[width=1\linewidth]{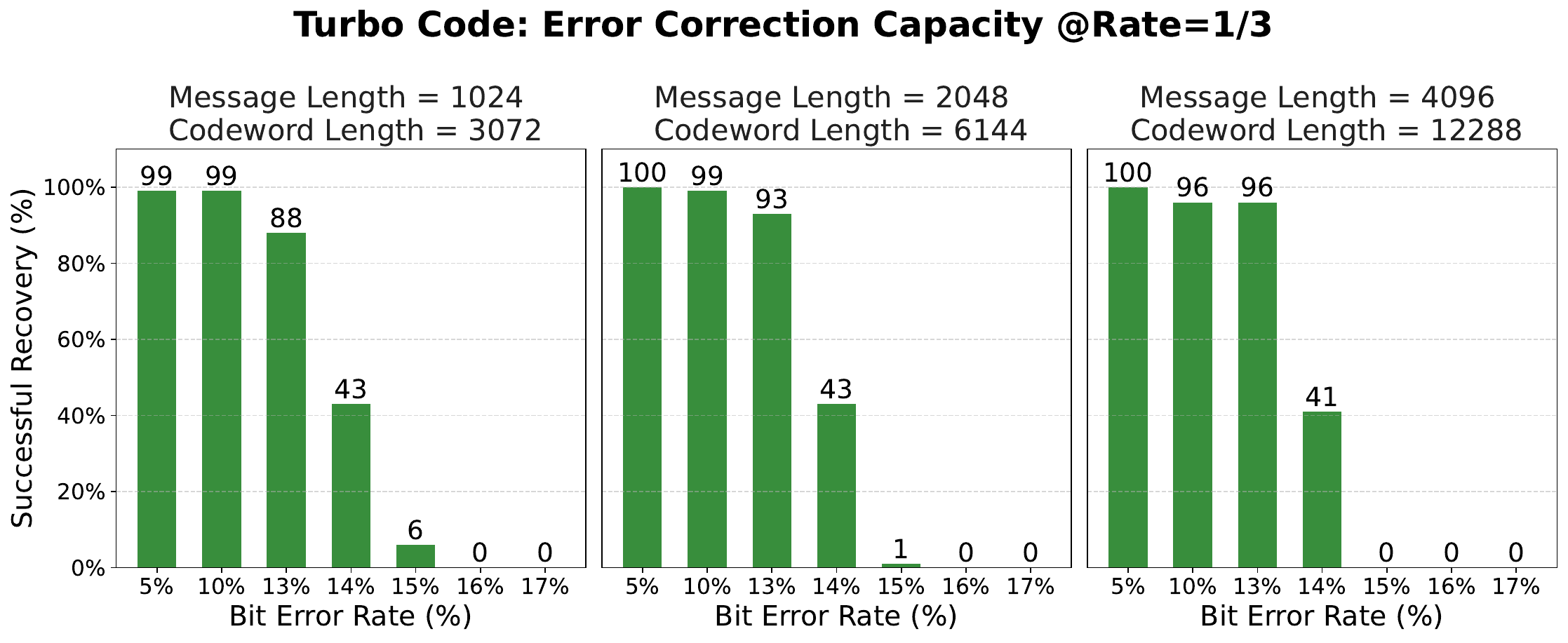} %\\[1ex]
    \includegraphics[width=1\linewidth]{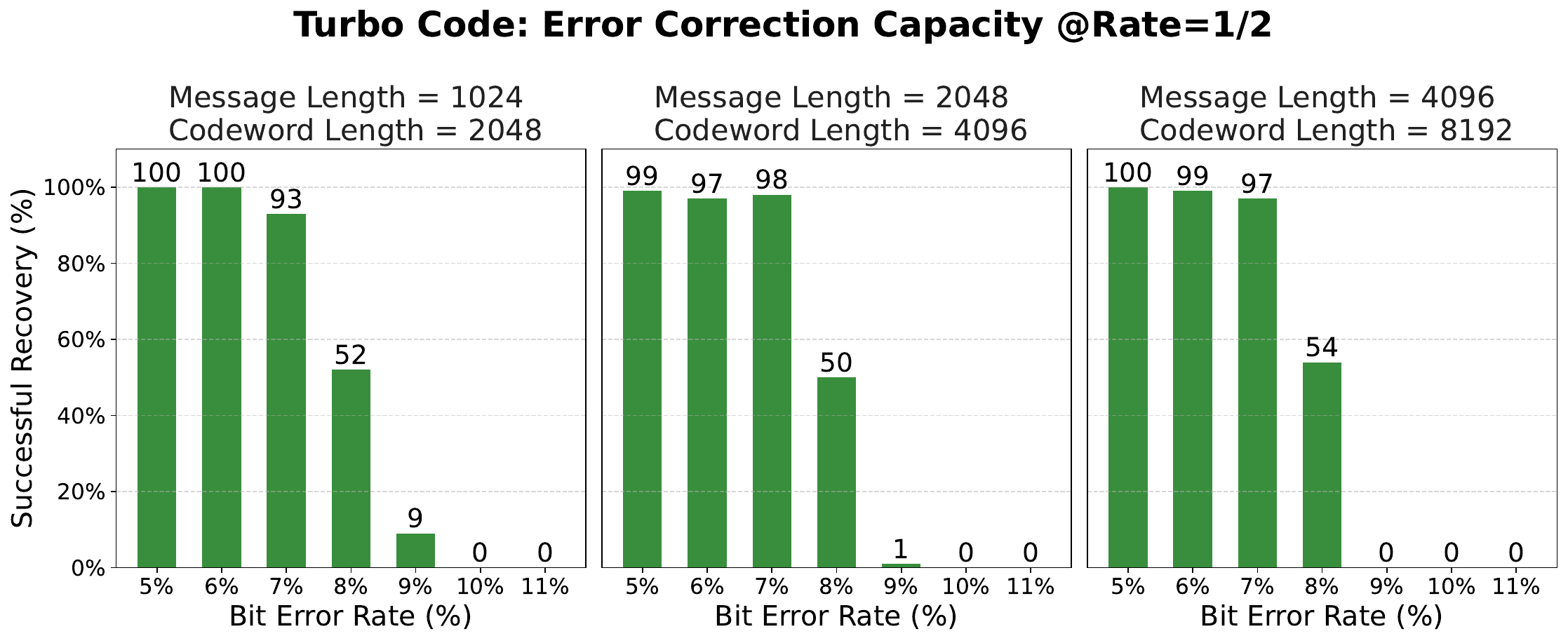}
    \caption{Biometric features show intra-user variability, so a fuzzy commitment scheme must tolerate such deviations. This figure demonstrates Turbo Code’s ability to correct random bit-flip errors across two code rates ($1/2$, $1/3$) and three message lengths ($k$). The error-free decoding threshold remains consistent for a given code rate across different $k$, enabling variable-sized Agent Tokens.}
    \label{fig:ecc-capacity}
\end{figure}

\definecolor{highlightcolor}{HTML}{B2DFDB}
\newcommand{\hlc}[1]{\cellcolor{highlightcolor}#1}

\definecolor{bitcolor}{HTML}{1A237E}
\newcommand{\bitc}[1]{\textcolor{bitcolor}{\textbf{#1}}}

\begin{table*}[t]
\centering
\small
\setlength{\tabcolsep}{1pt} %col gap
\renewcommand{\arraystretch}{1.2} %row gap
\resizebox{\textwidth}{!}{
\begin{tabular}{ccc|cccccc| cccccc| cccccc}
\toprule[1pt]
\midrule
& & & \multicolumn{6}{c|}{\textbf{CFP-FF}} & \multicolumn{6}{c|}{\textbf{LFW-a}} & \multicolumn{6}{c}{\textbf{Multi-PIE}} \\
& & \bitc {Agent Token Length} & \multicolumn{2}{c}{\bitc{1024}} & \multicolumn{2}{c}{\bitc{2048}} & \multicolumn{2}{c|}{\bitc{4096}} & \multicolumn{2}{c}{\bitc{1024}} & \multicolumn{2}{c}{\bitc{2048}} & \multicolumn{2}{c|}{\bitc{4096}} & \multicolumn{2}{c}{\bitc{1024}} & \multicolumn{2}{c}{\bitc{2048}} & \multicolumn{2}{c}{\bitc{4096}} \\
& & \textbf{} & TMR$(\uparrow)$ & FMR$(\downarrow)$ & TMR$(\uparrow)$ & FMR$(\downarrow)$ & TMR$(\uparrow)$ & FMR$(\downarrow)$ & TMR$(\uparrow)$ & FMR$(\downarrow)$ & TMR$(\uparrow)$ & FMR$(\downarrow)$ & TMR$(\uparrow)$ & FMR$(\downarrow)$ & TMR$(\uparrow)$ & FMR$(\downarrow)$ & TMR$(\uparrow)$ & FMR$(\downarrow)$ & TMR$(\uparrow)$ & FMR$(\downarrow)$\\
\midrule
\midrule
\multirow{6}{*}{\rotatebox{90}{Rate=1/3}} 
& \multirow{3}{*}{\rotatebox{90}{$\tau=14$}}
& IResNet101/Arc & 93.40 & 0.00 & 89.80 & 0.00 & 85.00 & 0.00 & 91.31 & 0.00 & 87.56 & 0.00 & 78.27 & 0.00 & 86.35 & 0.00 & 83.53 & 0.00 & 73.49 & 0.00 \\
&& IResNet101/Ada & 92.60 & 0.20 & 93.40 & 0.00 & 83.80 & 0.00 & \hlc93.69 & \hlc0.12 & \hlc92.32 & \hlc0.00 & \hlc85.00 & \hlc0.00 &  \hlc96.39 &  \hlc0.00 &  \hlc89.56 &  \hlc0.00 &  \hlc85.94 &  \hlc0.00 \\
&& KPRPE-ViTb/Ada & \hlc96.00 & \hlc0.00 &  \hlc93.20 &  \hlc0.00 &  \hlc84.80 &  \hlc0.00 & 93.27 & 0.12 & 88.39 & 0.00 & 84.29 & 0.00 & 89.16 & 0.00 & 88.35 & 0.00 & 75.50 & 0.00 \\
\cmidrule(l){2-21}
& \multirow{3}{*}{\rotatebox{90}{$\tau=15$}}
& IResNet101/Arc & 90.00 & 0.00 & 81.00 & 0.00 & 77.60 & 0.00 & 86.49 & 0.00 & 80.83 & 0.00 & 72.50 & 0.00 & 76.71 & 0.00 & 76.31 & 0.00 & 64.26 & 0.00 \\
&& IResNet101/Ada & 89.60 & 0.00 & 81.40 & 0.00 & 75.60 & 0.00 & 90.83 & 0.00 & 85.83 & 0.00 & 78.69 & 0.00 & 85.54 & 0.00 & 82.73 & 0.00 & 70.28 & 0.00 \\
&& KPRPE-ViTb/Ada & 93.00 & 0.20 & 87.60 & 0.00 & 80.60 & 0.00 & 88.81 & 0.00 & 83.99 & 0.00 & 75.30 & 0.00 & 85.14 & 0.00 & 76.31 & 0.00 & 66.67 & 0.00 \\

\midrule[1pt]
\multirow{6}{*}{\rotatebox{90}{Rate=1/2}} 
& \multirow{3}{*}{\rotatebox{90}{$\tau=8$}}
& IResNet101/Arc & 96.60 & 3.80 & 96.60 & 0.40 & 94.60 & 0.00 & 95.95 & 2.62 & 96.07 & 0.65 & 94.94 & 0.06 & 93.57 & 2.41 & 94.38 & 0.00 & 93.98 & 0.00 \\
&& IResNet101/Ada & 96.80 & 2.00 & 96.60 & 0.20 & 96.80 & 0.00 & 96.90 & 4.35 & 96.67 & 0.89 & 96.55 & 0.12 & 97.59 & 4.42 & 95.98 & 0.40 & 95.58 & 0.00 \\
&& KPRPE-ViTb/Ada & 96.40 & 7.80 & 96.40 & 1.20 & 96.40 & 0.20 & 96.25 & 2.14 & 96.25 & 0.54 & 96.13 & 0.12 & 96.79 & 1.20 & 91.57 & 0.40 & 94.38 & 0.00 \\

\cmidrule(l){2-21}
& \multirow{3}{*}{\rotatebox{90}{$\tau=9$}}
& IResNet101/Arc & 91.40 & 0.20 & 90.40 & 0.00 & 89.20 & 0.00 & 88.57 & 0.06 & 88.69 & 0.00 & 86.37 & 0.00 & 85.14 & 0.00 & 80.32 & 0.00 & 80.32 & 0.00 \\
&& IResNet101/Ada & 92.60 & 0.20 & 90.60 & 0.00 & 89.60 & 0.00 & 93.04 & 0.36 & 92.50 & 0.00 & 91.85 & 0.00 & 90.36 & 0.00 & 89.16 & 0.00 & 88.35 & 0.00 \\
&& KPRPE-ViTb/Ada & 92.40 & 0.60 & 92.80 & 0.00 & 91.00 & 0.00 & 90.00 & 0.06 & 89.23 & 0.00 & 89.70 & 0.00 & 83.13 & 0.00 & 84.74 & 0.00 & 82.73 & 0.00 \\
\midrule
\bottomrule[1pt]
\end{tabular}
}
\caption{Performance of the proposed face-based implementation of \texttt{\textbf{BIND}} across three datasets. Agent Token Length ($k$) is measured in bits; the corresponding biometric template length is either $3k$ or $2k$, depending on the coding rate ($1/3$ or $1/2$, respectively). Rows highlighted in \colorbox{highlightcolor}{green} indicate the best TMR-FMR trade-off. Ideally, best setting should achieve an FMR of zero while maximizing TMR.}
\label{tab:results}
\end{table*}
\section{Experimental Results} 

\subsection{Dataset and Feature Extraction}
We evaluate the proposed method on three standard facial recognition datasets: LFW-a \cite{lfwa_wolf2008descriptor} captures unconstrained variations in pose and illumination; CFP-FF \cite{cfp_sengupta2016} evaluates identity matching under frontal pose; and Multi-PIE \cite{multipie_gross2010soft} provides controlled variations in pose, expression, and illumination.
Features are extracted using three independently trained models. Two models are based on the IResnet101 architecture and use ArcFace \cite{face_deng2019arcface} and AdaFace \cite{face_kim2022adaface} as loss functions, while the third model uses a KPRPE-based transformer encoder architecture \cite{face_kim2024kprpe} trained use the AdaFace loss. All models are trained on WebFace4M\cite{webface_zhu2021webface260m}. Each model produces discriminative face embeddings with a dimensionality of $D=512$. All pretrained models and preprocessing were obtained from the CVLface library\footnote{\url{https://github.com/mk-minchul/CVLface}}.
All results reported in the paper are based on the worst-case assumption that the \textit{transformation key} $(\mathbf{K}_n)$ of a user is not kept secret (\textit{\textbf{i.e., a stolen key scenario}}). Hence, $\mathbf{K}_n$ does not provide any additional entropy that could increase the discriminability of the biometric templates. 
% When $\mathbf{K}_n$ is kept secret, higher representation entropy is expected, improving TMR/FMR performance.

\subsection{Main Results}

We evaluate the proposed implementation of the \texttt{\textbf{BIND}} framework by varying three key parameters and the results are summarized in Table \ref{tab:results}. First, we consider two code rates, namely, the default code rate of $1/3$ and a higher rate of $1/2$ obtained through puncturing. Within each code rate, we consider two values of decoding thresholds $\tau$, one ensuring zero recovery $(15\%, 9\%)$ for rates $(1/3, 1/2)$, respectively, and another allowing partial recovery $(14\%, 8\%)$ for rates $(1/3, 1/2)$, respectively. These thresholds are identified from Fig.~\ref{fig:ecc-capacity}. Finally, we consider three message lengths ($k = 1024, 2048, 4096$ bits) to account for varying sizes of agent tokens ($r = 3k$ and $2k$ for rates $1/3$ and $1/2$, respectively). The True Match Rate (TMR) and False Match Rate (FMR), expressed as a \%, are evaluated for each experimental setting. Some key insights from Table \ref{tab:results} are:

\begin{itemize}[leftmargin=*]
    \item Since a higher code rate has reduced error correction capability, we need larger $\lambda$ values to decrease the intra-class variability. While this improves the TMR marginally, the inter-class variability also gets reduced, thereby increasing the FMR significantly. In scenarios where non-repudiation is critical, the configuration with a code rate of $1/3$ and $0$ FMR should be preferred.
    \item Lowering the decoding threshold $\tau$ to allow partial recovery results in higher TMR for the same code rate. Although the zero recovery setting yields a lower TMR compared to the partial recovery setting, it ensures that there are no false accepts.
    \item Increasing the message length ($k$) typically increases the intra-class variability because the entropy of the underlying biometric features remains the same, but the Gaussian transformation introduces more variability. Thus, an increase in message length generally lowers the TMR without significantly affecting the FMR.
    \item The loss function used to train the feature extraction model has an effect on the discriminability of face feature embeddings. Thus, it can be observed that the ArcFace loss function yields a lower TMR compared to both the AdaFace models.
\end{itemize}
Overall, the AdaFace-based implementations with a default code rate of $1/3$ and the partial recovery setting for $\tau$ ($=14$) give the best performance (high TMR with almost $0$ FMR) over all three message lengths as highlighted in Table \ref{tab:results}.

\begin{comment}
    
While he higher code rate achieves slightly better TAR due to the reduced error-correction decoding-threshholdpoints; however, the corresponding FAR is also higher. This behavior may be attributed to the puncturing characteristics of the Turbo Code. 

\paragraph{Effect of $\tau$ Selection}
We compare operating thresholds of the biometric distributions mapped to two decoding-threshhold points for both code rates: the \emph{zero-tolerance} decoding-threshhold(15\% and 9\%), where no error correction occurs, and the \emph{partial-tolerance} decoding-threshhold(14\% and 8\%), where a given code has a partial chance of being fully recovered. 
Naturally, 
Decreasing the error-correction decoding-threshhold to obtain the partial-tolerance setting leads to a higher TAR, but with only a marginal increase in FAR, specifically less than 1\% for the $1/3$ code rate and only a few percentage points for the $1/2$ code rate.

\paragraph{Effect of Increasing Bits}
Increasing the bit count, i.e., the $n \rightarrow m$ mapping ratio increases, both the True Accept Rate (TAR) and the False Accept Rate (FAR) decreases. Typically, at a bit size of 1024, the system achieves a higher TAR, which gradually decreases as the bit size increases.

\paragraph{Effect of Loss Function}

\end{comment}

\subsection{Discussion on Security}

Since the proposed \texttt{\textbf{BIND}} framework is based on the well-studied fuzzy commitment construct, its security properties directly follow from the characteristics of the underlying error correction scheme \cite{fuzzy_juels1999fuzzy_commit}. In particular, the computational complexity of reverse engineering the secure sketch depends upon the inherent entropy of the biometric feature representation \cite{biom-lim2015entropy} as well as the properties of Turbo code (especially its code rate and selected error correction threshold). 
The only modification that we introduce in this work is the feature adaptation step, which imparts some cancelability to the stored biometric template. Given that we do not assume secrecy of the transformation key, the dithering and Gaussian projection steps do not protect against inversion. 
The non-invertibility of the WTA-hash binarization step has already been studied in \cite{cancel_lsh_jin2017ranking_IoM}. Therefore, we defer a more rigorous security analysis of the proposed framework and its specific implementation to future work.

\section{Conclusion}
%% TODO rewrite
%\TODO{Rewrite this based on main flow}
In this work, we address the challenge of authorizing AI agents based on biometric signals that provide strong and irrefutable evidence of user presence. We propose a framework called \texttt{\textbf{BIND}} that tightly binds biometric representations with agent identity and task-specific scope encodings, thereby ensuring that delegated actions remain cryptographically and biologically linked to the authorizing user. Through the innovative use of the fuzzy commitment construct and a novel feature adaptation method, we demonstrate that face biometric data can be reliably used for human-agent binding. Our results highlight the importance of balancing robustness and encoding capacity. Overall, this work provides a step towards more secure agentic AI systems, where delegation of authority by humans to AI agents can be reliably established.

%%-------------------------------------------------------------

\paragraph{Acknowledgment}: This work was partially supported by the Office of Naval Research under Grant No. N00014-24-1-2168.

{\small      
\bibliographystyle{ieee}
\bibliography{reference}
}

\newpage
\appendix
\setcounter{proposition}{0}

\section{Effect of Gaussian Dithering}
\begin{proposition}
Let $\mathbf{x}, \mathbf{y} \sim \mathcal{N}(\mathbf{0}, \mathbf{I}_D)$ be two high-dimensional vectors with cosine similarity $\psi$. Let $\mathbf{g} \sim \mathcal{N}(\mathbf{0}, \mathbf{I}_D)$ be a random noise vector independent of both $\mathbf{x}$ and $\mathbf{y}$. Consider the perturbed vectors
$\mathbf{x}' = \mathbf{x} + \lambda \mathbf{g}, \quad \mathbf{y}' =\mathbf{y} + \lambda \mathbf{g}$.
Then, for sufficiently large $D$, the cosine similarity $\psi^g$ between $\mathbf{x}'$ and $\mathbf{y}'$ satisfies:
\[
\psi^g \approx \frac{\psi + \lambda^2}{1 + \lambda^2}.
\]
\end{proposition}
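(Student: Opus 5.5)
The plan is to expand the inner product and the squared norms of the perturbed vectors, and to replace each resulting sum of $D$ coordinate products by its expectation using the law of large numbers (concentration) for large $D$. Write
\[
\langle \mathbf{x}',\mathbf{y}'\rangle = \langle \mathbf{x},\mathbf{y}\rangle + \lambda\langle \mathbf{x},\mathbf{g}\rangle + \lambda\langle \mathbf{g},\mathbf{y}\rangle + \lambda^2\Vert\mathbf{g}\Vert^2,
\]
\[
\Vert\mathbf{x}'\Vert^2 = \Vert\mathbf{x}\Vert^2 + 2\lambda\langle \mathbf{x},\mathbf{g}\rangle + \lambda^2\Vert\mathbf{g}\Vert^2,
\]
and analogously for $\Vert\mathbf{y}'\Vert^2$. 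Then divide every term by $D$.

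The key steps, in order, are as follows.

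First, since $\mathbf{x},\mathbf{y},\mathbf{g}$ have i.i.d.\ standard normal coordinates, $\Vert\mathbf{x}\Vert^2/D$, $\Vert\mathbf{y}\Vert^2/D$ and $\Vert\mathbf{g}\Vert^2/D$ each concentrate at $1$, with fluctuations of order $O(D^{-1/2})$ (chi-square concentration). This also matches the preprocessing in the paper, where features are normalized to norm $\sqrt{D}$. In that setting we may simply treat $\Vert\mathbf{x}\Vert=\Vert\mathbf{y}\Vert=\sqrt{D}$ exactly, so that $\langle\mathbf{x},\mathbf{y}\rangle/D=\psi$ holds exactly.

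Second, because $\mathbf{g}$ is independent of $\mathbf{x}$ and $\mathbf{y}$, the cross terms are controlled. Conditional on $\mathbf{x}$, the quantity $\langle\mathbf{x},\mathbf{g}\rangle$ is distributed as $\mathcal{N}(0,\Vert\mathbf{x}\Vert^2)$, so $\langle\mathbf{x},\mathbf{g}\rangle/D = O_p(D^{-1/2})\to 0$. The same holds for $\langle\mathbf{g},\mathbf{y}\rangle/D$.

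Third, combining these, the normalized inner product satisfies $\langle\mathbf{x}',\mathbf{y}'\rangle/D \to \psi+\lambda^2$, and each normalized squared norm tends to $1+\lambda^2$. By the continuous mapping theorem (Slutsky), the cosine similarity converges:
\[
\psi^g = \frac{\langle\mathbf{x}',\mathbf{y}'\rangle/D}{\sqrt{\Vert\mathbf{x}'\Vert^2/D}\sqrt{\Vert\mathbf{y}'\Vert^2/D}} \to \frac{\psi+\lambda^2}{1+\lambda^2}.
\]
The error is $O_p(D^{-1/2})$, which is what the statement means by $\approx$.

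The main obstacle is interpretive rather than technical. Two independent vectors $\mathbf{x},\mathbf{y}\sim\mathcal{N}(\mathbf{0},\mathbf{I}_D)$ cannot have a prescribed cosine similarity $\psi$, so the hypothesis has to be read either as correlated coordinates with correlation $\psi$ or as conditioning on a fixed pair with norms $\approx\sqrt{D}$ and cosine $\psi$. I would adopt the conditioning reading. Under it, the only randomness used is that of $\mathbf{g}$, and the argument goes through with $\mathbf{x},\mathbf{y}$ deterministic, requiring only $\Vert\mathbf{x}\Vert^2/D,\Vert\mathbf{y}\Vert^2/D\to 1$. I would state this assumption explicitly.
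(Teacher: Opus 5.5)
Your proposal is correct and follows the paper's approach. Both expand the inner product and the squared norms, replace $\langle\mathbf{x},\mathbf{g}\rangle$ and $\langle\mathbf{y},\mathbf{g}\rangle$ by $0$, $\Vert\mathbf{x}\Vert^2$ and $\Vert\mathbf{g}\Vert^2$ by $D$, and $\langle\mathbf{x},\mathbf{y}\rangle$ by $\psi D$, and then substitute. Your additions only make the paper's heuristic argument rigorous: normalizing by $D$ shows the cross terms are $O_p(\sqrt{D})$, which is negligible relative to $D$ rather than literally $\approx 0$ as the paper writes, and you state an explicit reading of the cosine-$\psi$ hypothesis.
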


\begin{proof}
Let the perturbed vectors be
\[
\mathbf{x}' = \mathbf{x} + \lambda \mathbf{g}, \quad
\mathbf{y}' = \mathbf{y} + \lambda \mathbf{g}
\]
The cosine similarity is defined as
\[
\psi^g = \frac{\langle \mathbf{x}', \mathbf{y}' \rangle}{\|\mathbf{x}'\| \, \|\mathbf{y}'\|}
\]

\noindent \textbf{\textit{Expanding the inner product in the numerator}}:
\[
\langle \mathbf{x}', \mathbf{y}' \rangle
= \langle \mathbf{x} + \lambda \mathbf{g}, \mathbf{y} + \lambda \mathbf{g} \rangle
\]
\[
\langle \mathbf{x}', \mathbf{y}' \rangle
= \langle \mathbf{x}, \mathbf{y} \rangle
+ \lambda \langle \mathbf{x}, \mathbf{g} \rangle
+ \lambda \langle \mathbf{y}, \mathbf{g} \rangle
+ \lambda^2 \langle \mathbf{g}, \mathbf{g} \rangle
\]
Using results for high-dimensional Gaussian vectors,
\[
\langle \mathbf{x}, \mathbf{g} \rangle \approx 0, 
\quad \langle \mathbf{y}, \mathbf{g} \rangle \approx 0,
\quad \langle \mathbf{g}, \mathbf{g} \rangle \approx d,
\quad \langle \mathbf{x}, \mathbf{y} \rangle \approx S\cdot d
\]
Hence,
\[
\langle \mathbf{x}', \mathbf{y}' \rangle \approx S\cdot d + \lambda^2\cdot d
\]

\noindent \textbf{\textit{Expanding the norms in the denominator}}:
\[
\|\mathbf{x}'\|^2
= \|\mathbf{x}\|^2 + 2\lambda \langle \mathbf{x}, \mathbf{g} \rangle + \lambda^2 \|\mathbf{g}\|^2
\]
Using results for high-dimensional Gaussian vectors,
\[
\|\mathbf{x}\|^2 \approx d, 
\quad \langle \mathbf{x}, \mathbf{g} \rangle \approx 0, 
\quad \|\mathbf{g}\|^2 \approx d
\]
\[
\|\mathbf{x}'\|^2 \approx d(1 + \lambda^2)
\]
Similarly,
\[
\|\mathbf{y}'\|^2 \approx d(1 + \lambda^2)
\]
Thus,
\[
\Rightarrow \quad
\|\mathbf{x}'\| \approx \|\mathbf{y}'\| \approx \sqrt{d(1+\lambda^2)}
\]

\noindent\textbf{\textit{Substituting these approximations}}:
\[
\psi^g \approx
\frac{d(S + \lambda^2)}{\sqrt{d(1+\lambda^2)} \, \sqrt{d(1+\lambda^2)}}
\]
After simplification \(d(1+\lambda^2)\), we obtain
\[
\psi^g \approx \frac{S + \lambda^2}{1 + \lambda^2}
\]

\end{proof}

\section{Effect of Gaussian Random Projection}
\begin{proposition}
Let $\mathbf{x}, \mathbf{y} \sim \mathcal{N}(\mathbf{0}, \mathbf{I}_D)$ be two high-dimensional vectors with cosine similarity $\psi^g$. Let $\mathbf{W} \in \mathbb{R}^{r \times D}$ be a random projection matrix independent of both $\mathbf{x}$ and $\mathbf{y}$, where each entry satisfies $\mathbf{W}^{[i,j]} \sim \mathcal{N}(0,1)$. Consider the projected vectors
$\mathbf{x}' = \mathbf{Wx}, \quad \mathbf{y}' = \mathbf{Wy}$.
Then, for sufficiently large $D$ and $r$, the cosine similarity $\psi^p$ between $\mathbf{x}'$ and $\mathbf{y}'$ satisfies:
\[
\psi^p \approx \psi^g.
\]
\end{proposition}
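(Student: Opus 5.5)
The plan is to condition on $\mathbf{x}$ and $\mathbf{y}$, which fixes their inner product and norms, and then exploit the randomness of $\mathbf{W}$ alone. Each row $\mathbf{w}_i$ of $\mathbf{W}$ is an independent $\mathcal{N}(\mathbf{0},\mathbf{I}_D)$ vector. Hence the pair $(\mathbf{w}_i^\top\mathbf{x}, \mathbf{w}_i^\top\mathbf{y})$ is jointly Gaussian with
\[
\mathbb{E}[(\mathbf{w}_i^\top\mathbf{x})^2]=\|\mathbf{x}\|^2,\quad
\mathbb{E}[(\mathbf{w}_i^\top\mathbf{y})^2]=\|\mathbf{y}\|^2,\quad
\mathbb{E}[(\mathbf{w}_i^\top\mathbf{x})(\mathbf{w}_i^\top\mathbf{y})]=\langle\mathbf{x},\mathbf{y}\rangle .
\]
These three identities are just $\mathbb{E}[\mathbf{w}_i\mathbf{w}_i^\top]=\mathbf{I}_D$ applied to the quadratic forms.

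Next I would write the projected quantities as sums of $r$ i.i.d. terms:
\[
\langle\mathbf{x}',\mathbf{y}'\rangle=\sum_{i=1}^r(\mathbf{w}_i^\top\mathbf{x})(\mathbf{w}_i^\top\mathbf{y}),\qquad
\|\mathbf{x}'\|^2=\sum_{i=1}^r(\mathbf{w}_i^\top\mathbf{x})^2,\qquad
\|\mathbf{y}'\|^2=\sum_{i=1}^r(\mathbf{w}_i^\top\mathbf{y})^2 .
\]
By the law of large numbers, dividing each by $r$ gives
\[
\tfrac1r\langle\mathbf{x}',\mathbf{y}'\rangle\to\langle\mathbf{x},\mathbf{y}\rangle,\qquad
\tfrac1r\|\mathbf{x}'\|^2\to\|\mathbf{x}\|^2,\qquad
\tfrac1r\|\mathbf{y}'\|^2\to\|\mathbf{y}\|^2 .
\]
The variances are finite: each summand is a product of Gaussians, with variance at most $2\|\mathbf{x}\|^2\|\mathbf{y}\|^2$. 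So each relative fluctuation is $O(1/\sqrt r)$, which is the same kind of ``$\approx$'' used in the proof of Proposition 1.

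Substituting into the cosine similarity, the factors of $r$ cancel, and the continuous mapping theorem gives
\[
\psi^p=\frac{\langle\mathbf{x}',\mathbf{y}'\rangle}{\|\mathbf{x}'\|\,\|\mathbf{y}'\|}\approx\frac{\langle\mathbf{x},\mathbf{y}\rangle}{\|\mathbf{x}\|\,\|\mathbf{y}\|}=\psi^g .
\]
This step is valid because the denominator $\|\mathbf{x}\|\|\mathbf{y}\|$ is nonzero almost surely. The large-$D$ hypothesis is used in the same way as in Proposition 1: $\|\mathbf{x}\|^2\approx\|\mathbf{y}\|^2\approx D$ and $\langle\mathbf{x},\mathbf{y}\rangle\approx\psi^g D$. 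This lets one write the limits concretely as $\psi^g D/D$, matching the presentation style of the earlier proof, although the argument only needs $r$ large.

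The main obstacle is making the approximation quantitative rather than merely asymptotic. The variance of the empirical cosine is roughly $(1-(\psi^g)^2)^2/r$; I would get this by the delta method applied to the three sums. The point is that the error is controlled by $r$ rather than $D$, which justifies the paper's use of large $r$ (namely $2r$ projections). If a rigorous bound is wanted, I would instead apply a Johnson–Lindenstrauss style concentration inequality to $\mathbf{x}+\mathbf{y}$ and $\mathbf{x}-\mathbf{y}$. The polarization identity then bounds the inner-product error by $\epsilon\|\mathbf{x}\|\|\mathbf{y}\|$ with probability $1-4e^{-c r\epsilon^2}$.
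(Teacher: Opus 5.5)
Your proposal is correct and follows essentially the same route as the paper's proof: both write the projected inner product and norms as sums over the rows $\mathbf{w}_i$ of $\mathbf{W}$, use $\mathbb{E}[\mathbf{w}_i\mathbf{w}_i^\top]=\mathbf{I}_D$ with a law-of-large-numbers step, and cancel the common factor $r$ in the cosine ratio. The only difference is that you apply the concentration to the three scalar quadratic forms conditional on $\mathbf{x},\mathbf{y}$, rather than asserting the matrix approximation $\mathbf{W}^\top\mathbf{W}\approx r\mathbf{I}_D$; this is slightly sharper (it needs only $r$ large, not $r\gg D$) and gives the explicit $O(1/\sqrt{r})$ rate that the paper omits.
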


\begin{proof}
The cosine similarity after projection is
\[
\psi^p
=
\frac{\langle \mathbf{Wx},\mathbf{Wy}\rangle}
{\|\mathbf{Wx}\|\,\|\mathbf{Wy}\|}
\]

\noindent The terms can be expanded as follows
\[
\langle \mathbf{Wx},\mathbf{Wy}\rangle
=
\mathbf{x}^T\mathbf{W}^T\mathbf{W}\mathbf{y}
\]
\[
\|\mathbf{Wx}\|^2
=
\mathbf{x}^T\mathbf{W}^T\mathbf{W}\mathbf{x}
\]
\[
\|\mathbf{Wy}\|^2
=
\mathbf{y}^T\mathbf{W}^T\mathbf{W}\mathbf{y}
\]

\noindent Since the entries of $\mathbf{W}$ are i.i.d.\ $\mathcal N(0,1)$,
\[
\mathbf{W}^T\mathbf{W}
=
\sum_{i=1}^{m}\mathbf{w}_i\mathbf{w}_i^T
\]
where $\mathbf{w}_i$ denotes the $i$-th row of $\mathbf{W}$.

\noindent The expectation for each row can be written as
\[
\mathbb E[\mathbf{w}_i\mathbf{w}_i^T]
=
\mathbf{I}_d
\]
Thus,
\[
\mathbb E[\mathbf{W}^T\mathbf{W}]
=
m\mathbf{I}_d
\]

\noindent Hence, for sufficiently large $m$,
\[
\mathbf{W}^T\mathbf{W}
\approx
m\mathbf{I}_d
\]

\noindent Substituting into the numerator,
\[
\langle \mathbf{Wx},\mathbf{Wy}\rangle
=
\mathbf{x}^T\mathbf{W}^T\mathbf{W}\mathbf{y}
\approx
m\,\mathbf{x}^T\mathbf{y}
\]

\noindent Similarly,
\[
\|\mathbf{Wx}\|^2
\approx
m\,\|\mathbf{x}\|^2,
\qquad
\|\mathbf{Wy}\|^2
\approx
m\,\|\mathbf{y}\|^2
\]
Therefore the similarity can be written as,
\[
\psi^p
\approx
\frac{m\,\mathbf{x}^T\mathbf{y}}
{\sqrt{m\,\|\mathbf{x}\|^2}\,
 \sqrt{m\,\|\mathbf{y}\|^2}}
=
\frac{\mathbf{x}^T\mathbf{y}}
{\|\mathbf{x}\|\,\|\mathbf{y}\|}
=
S
\]

\noindent Thus, random Gaussian projection approximately preserves cosine similarity,
\[
\psi^p \approx \psi^g
\]
\end{proof}

\section{Effect of Pairwise WTA-Hashing}
\begin{proposition}
Let $\mathbf{x}, \mathbf{y}$ be two high-dimensional vectors that are jointly Gaussian with i.i.d. coordinate pairs having correlation $\psi^p$, i.e. $(\mathbf{x}^{[k]}, \mathbf{y}^{[k]}) \sim \mathcal{N}(0, \Sigma)$, and $\mathtt{corr}(\mathbf{x}^{[k]}, \mathbf{y}^{[k]})=\psi^p$.
So, their cosine similarity converges to $\psi^p$ as $m$ becomes large. 

Let $\Pi_1, \Pi_2$ denote random perturbation operators acting on the coordinates of the vectors. Define the binary embeddings
$\mathbf{w} = \Pi_1(\mathbf{x}) < \Pi_2(\mathbf{x})$ and $\mathbf{v} = \Pi_1(\mathbf{y}) < \Pi_2(\mathbf{y})$
where the comparison is applied element-wise to produce binary vectors.
Then, for sufficiently large $r$, the Hamming similarity $\psi^h$ between $\mathbf{w}$ and $\mathbf{v}$ satisfies
\[
\psi^h \approx 1 - \frac{\arccos(\psi^p)}{\pi}.
\]
\end{proposition}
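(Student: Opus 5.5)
The plan is to reduce each output bit to a sign comparison of a pair of correlated Gaussians, evaluate that via Sheppard's formula, and then average over the $r$ bits.

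\textbf{Setup.} Write the $i$-th bits as
\[
\mathbf{w}^{[i]} = \mathbbm{1}\!\left[\mathbf{x}^{[a_i]} < \mathbf{x}^{[b_i]}\right], \qquad \mathbf{v}^{[i]} = \mathbbm{1}\!\left[\mathbf{y}^{[a_i]} < \mathbf{y}^{[b_i]}\right],
\]
where $a_i=\Pi_1(i)$ and $b_i=\Pi_2(i)$. In the paper's implementation the two compared coordinates come from the separate blocks $\mathbf{q}^{[1]}$ and $\mathbf{q}^{[2]}$, so $a_i\neq b_i$ automatically. In the single-vector formulation of the statement, the indices with $a_i=b_i$ are fixed points of $\Pi_2^{-1}\Pi_1$. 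Their expected number is $1$, so they change the Hamming similarity by only $O(1/r)$ and can be discarded.

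I will read $\Sigma$ as having equal marginal variances $\sigma^2$; this is the natural reading of ``correlation $\psi^p$'' after the unit-variance normalization. Define
\[
d_i=\mathbf{x}^{[a_i]}-\mathbf{x}^{[b_i]}, \qquad e_i=\mathbf{y}^{[a_i]}-\mathbf{y}^{[b_i]}.
\]
Because the coordinate pairs are i.i.d., $(d_i,e_i)$ is jointly Gaussian with mean zero, $\mathrm{Var}(d_i)=\mathrm{Var}(e_i)=2\sigma^2$, and $\mathrm{Cov}(d_i,e_i)=2\psi^p\sigma^2$. Hence $\mathtt{corr}(d_i,e_i)=\psi^p$.

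\textbf{Single-bit step.} The bits disagree exactly when $\mathrm{sign}(d_i)\neq\mathrm{sign}(e_i)$. Ties occur with probability zero. For a standard bivariate Gaussian with correlation $\rho$, Sheppard's formula gives $\Pr[\mathrm{sign}(d)\neq\mathrm{sign}(e)]=\arccos(\rho)/\pi$. I would prove this by the usual rotation argument, as in Goemans--Williamson or SimHash:
\begin{itemize}
\item Write $(d,e)=(\langle \mathbf{u},\mathbf{z}\rangle,\langle \mathbf{u}',\mathbf{z}\rangle)$ with $\mathbf{z}\sim\mathcal N(\mathbf 0,\mathbf I_2)$ and unit vectors $\mathbf{u},\mathbf{u}'$ at angle $\arccos\rho$.
\item The direction of $\mathbf{z}$ is uniform on the circle.
\item The signs differ exactly when the line orthogonal to $\mathbf{z}$ separates $\mathbf{u}$ and $\mathbf{u}'$, which happens with probability $\arccos(\rho)/\pi$.
\end{itemize}
Therefore $\mathbb{E}\big[\mathbbm{1}[\mathbf{w}^{[i]}=\mathbf{v}^{[i]}]\big]=1-\arccos(\psi^p)/\pi$ for every $i$.

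\textbf{Averaging step.} The Hamming similarity is
\[
\psi^h=\frac1r\sum_{i=1}^{r} Z_i, \qquad Z_i=\mathbbm{1}[\mathbf{w}^{[i]}=\mathbf{v}^{[i]}],
\]
so by linearity $\mathbb{E}[\psi^h]=1-\arccos(\psi^p)/\pi$. It remains to show concentration, and this is the step I expect to need the most care, because the $Z_i$ are not independent: a coordinate can appear in two different pairs.

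The remedy is the permutation structure. Since $\Pi_1$ and $\Pi_2$ are bijections, each coordinate index appears in at most two pairs, once as some $a_i$ and once as some $b_j$. Consequently $Z_i$ and $Z_j$ are independent unless the pairs $\{a_i,b_i\}$ and $\{a_j,b_j\}$ share a coordinate. This gives a dependency graph of maximum degree at most $2$; in the two-block implementation the $Z_i$ are in fact fully independent. So
\[
\mathrm{Var}(\psi^h)\le \frac{1}{r^2}\cdot r\cdot 3\cdot\tfrac14 = O(1/r),
\]
and Chebyshev's inequality yields $\psi^h\to 1-\arccos(\psi^p)/\pi$ in probability as $r\to\infty$. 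If an exponential tail bound is wanted, I would use a Hoeffding-type bound for bounded-degree dependency graphs (Janson's inequality) or split the indices into three independent classes.

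Finally, I would remark on the link to Prop.~2: when the projected coordinates come from $\mathbf{W}$ applied to fixed $\mathbf{x},\mathbf{y}$, the i.i.d.\ coordinate-pair assumption holds exactly conditional on the inputs. In that case the correlation is the exact cosine similarity of the inputs, which is consistent with chaining the propositions.
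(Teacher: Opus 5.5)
Your proposal is correct and follows the same route as the paper's proof: reduce each bit to sign agreement of the coordinate differences $(\Delta x,\Delta y)$, whose correlation is again $\psi^p$, then apply Sheppard's formula and average over the $r$ bits. Where you go further is the concentration step: the paper simply asserts that the bits are i.i.d.\ Bernoulli, which holds for the two-block construction with $\mathbf{q}^{[1]},\mathbf{q}^{[2]}$ but not literally for the single-vector comparison in the statement, while your handling of fixed points and your bounded-degree dependency-graph argument with Chebyshev actually justify it.
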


\begin{proof}
Consider a single bit of the hash. Let $(i,j)$ be the coordinate pair
selected by the random perturbation operators $\Pi_1$ and $\Pi_2$.
The resulting binary bits are
\[
u=\mathbf{1}\{x_i<x_j\},
\qquad
v=\mathbf{1}\{y_i<y_j\}
\]
which can be written equivalently as,
\[
u=\mathbf{1}\{x_j-x_i>0\},
\qquad
v=\mathbf{1}\{y_j-y_i>0\}
\]

\noindent Lets define
\[
\Delta x=x_j-x_i,
\qquad
\Delta y=y_j-y_i
\]

\noindent Assume that the coordinate pairs $(x_i,y_i)$ and $(x_j,y_j)$ are jointly
Gaussian with correlation coefficient equal to the cosine similarity $\rho = \psi^p$.
Since differences of Gaussian random variables remain Gaussian,
$(\Delta x, \Delta y)$ is a bivariate Gaussian vector.
\\

\noindent \textbf{\textit{Showing that Correlation equal to Cosine Similarity}}

\noindent The variance can be written as 
\[
\operatorname{Var}(\Delta x)
=
\operatorname{Var}(x_j)+\operatorname{Var}(x_i)
=
2
\]
and similarly
\[
\operatorname{Var}(\Delta y)=2
\]

Further the covariance will be given by,
\begin{align*}
\operatorname{Cov}(X,Y)
&=
\operatorname{Cov}(x_j-x_i,\; y_j-y_i) \\
&=
\operatorname{Cov}(x_j,y_j)
-\operatorname{Cov}(x_j,y_i) \\
& \quad-\operatorname{Cov}(x_i,y_j)
+\operatorname{Cov}(x_i,y_i)
\end{align*}

\noindent Since different coordinates are independent,
\[
\operatorname{Cov}(x_j,y_i)
=
\operatorname{Cov}(x_i,y_j)
=
0
\]
and therefore
\[
\operatorname{Cov}(\Delta x, \Delta y)
=
\psi^p+\psi^p
=
2\psi^p
\]

\noindent Hence correlation $\rho$ can be written as,
\[
\operatorname{Corr}(\Delta x, \Delta y)
=
\frac{\operatorname{Cov}(\Delta x, \Delta y)}
{\sqrt{\operatorname{Var}(\Delta x)\operatorname{Var}(\Delta x)}}
=
\frac{2\psi^p}{\sqrt{2}\sqrt{2}}
=
S
\]

\noindent \textbf{\textit{Hamming Similarity Equivalence}}

\noindent The bits $u$ and $v$ agree precisely when $\Delta x$ and $\Delta y$ have the same sign.
For a bivariate Gaussian pair with correlation $S$, Sheppard's formula
states that
\[
\mathbb{P}\Bigl(\operatorname{sign}(\Delta x)=\operatorname{sign}(\Delta y)\Bigr)
=
1-\frac{\arccos(\psi^p)}{\pi}
\]

Therefore,
\[
\mathbb{P}(u=v)
=
1-\frac{\arccos(\psi^p)}{\pi}
\]

The Hamming similarity is
\[
\psi^h
=
\frac{1}{m}
\sum_{t=1}^{m}
\mathbf{1}\{u_t=v_t\}
\]
Since the bits are i.i.d. Bernoulli random variables with success
probability
\[
1-\frac{\arccos(\psi^p)}{\pi},
\]
Thus, for sufficiently large $b$, the hamming similiarity can be written as as follows \cite{lsh_charikar2002simhash},
\[
\psi^h
\approx
1-\frac{\arccos(\psi^p)}{\pi}
\]
\end{proof}

\section{Deriving $\lambda$ from Similarity  Contraints}

\noindent Combining the above relations, we obtain the forward model:
\[
\psi^h \approx 1 - \frac{\arccos\!\left(\frac{\psi + \lambda^2}{1 + \lambda^2}\right)}{\pi}.
\]

Then,
\[
\cos\bigl(\pi(1-\psi^h)\bigr)
\approx
\frac{\psi + \lambda^2}{1 + \lambda^2}.
\]

Define
\[
c := \cos\bigl(\pi(1-\psi^h)\bigr).
\]

We solve for $\lambda^2$:
\[
c(1 + \lambda^2) = \psi + \lambda^2,
\]
\[
c + c\lambda^2 = \psi + \lambda^2,
\]
\[
\lambda^2(c - 1) = \psi - c.
\]

Thus,

\[
\lambda^2 = \frac{c - \psi}{1 - c}.
\]

Finally,
\[
\lambda = \sqrt{\frac{c - \psi}{1 - c}},
\quad
\text{where } c = \cos\bigl(\pi(1-\psi^h)\bigr).
\]

This provides a closed-form estimator for the dithering strength $\lambda$ to obtain Hamming similarity $\psi^h$ and the original cosine similarity $S$.

For a given operating threshold $\psi$ indicating Cosine Similarity and decoding error bound $\tau$, which in terms of Hamming Similarity is $1-\tau$. Then formula is written as 

\[
\lambda = \sqrt{\frac{\cos\bigl(\pi\tau\bigr) - \psi}{1 - \cos\bigl(\pi\tau\bigr)}}.
\]

This $\lambda$ is used in the feature adaptation process, ensuring that the biometric representation is compatible with the error-correction code.

\end{document}